\documentclass{article}

\usepackage[preprint]{neurips_2026}

\usepackage[utf8]{inputenc} 
\usepackage[T1]{fontenc}    
\usepackage{placeins} 
\usepackage{hyperref}       
\usepackage{url}            
\usepackage{booktabs}       
\usepackage{amsfonts}       
\usepackage{nicefrac}       
\usepackage{microtype}      
\usepackage{xcolor}         
\usepackage{amsmath}
\usepackage{amssymb}
\usepackage{amsthm}
\usepackage{bm}
\usepackage{algorithm}
\usepackage{algorithmic}
\usepackage{graphicx}
\usepackage{comment}
\usepackage{enumitem}
\usepackage{mathtools}
\usepackage[table]{xcolor}
\usepackage[most]{tcolorbox} 
\usepackage{tikz}
\usetikzlibrary{shapes,arrows,positioning,fit,backgrounds}
\tikzstyle{block} = [rectangle, draw, fill=blue!20,
    text width=8em, text centered, rounded corners, minimum height=3em]
\tikzstyle{cloud} = [draw, ellipse, fill=green!20,
    min width=8em, text width=8em, text centered]
\tikzstyle{line} = [draw, -latex']
\tikzstyle{container} = [draw, rectangle, dashed, inner sep=0.3cm]

\newtheorem{theorem}{Theorem}

\newtheorem{lemma}{Lemma}
\newtheorem{remark}{Remark}
\newtheorem{definition}{Definition}
\newtheorem{assumption}{Assumption}
\newtheorem*{theorem*}{Theorem}
\DeclareMathOperator{\clip}{clip}

\title{H\"older Policy Optimisation}
 
\renewcommand{\footnotemark}{}
 \vspace{-3mm}

\author{%
\small
\parbox{\textwidth}{\centering\bfseries
  \mbox{Yuxiang Chen\textsuperscript{1,*}\thanks{%
    \textsuperscript{*}Equal contribution.\quad
    \textsuperscript{\textdagger}Corresponding author: \texttt{jun.wang@cs.ucl.ac.uk}.\quad
    Code available at \url{https://github.com/YihangChen9/HolderPO}.
    \textsuperscript{1}University College London, London, United Kingdom.\quad
    \textsuperscript{2}Shanghai Jiao Tong University, Shanghai, China.\quad
    \textsuperscript{3}The Hong Kong University of Science and Technology (Guangzhou), Guangzhou, China.%
  }}\quad
  \mbox{Dingli Liang\textsuperscript{1,*}}\quad
  \mbox{Yihang Chen\textsuperscript{1,*}}\quad
  \mbox{Ziqin Gong\textsuperscript{3}}\quad
  \mbox{Chenyang Le\textsuperscript{2}}\quad
  \mbox{Zhaokai Wang\textsuperscript{2}}%
}\\[2pt]
\small
\parbox{\textwidth}{\centering\bfseries
  \mbox{Jiachen Zhu\textsuperscript{2}}\quad
  \mbox{Lingyu Yang\textsuperscript{2}}\quad
  \mbox{Jianghao Lin\textsuperscript{2}}\quad
  \mbox{Weinan Zhang\textsuperscript{2}}\quad
  \mbox{Jun Wang\textsuperscript{1,\textdagger}}%
}%
}

\begin{document}

\maketitle

\vspace{-5mm}
\begin{abstract}
Group Relative Policy Optimisation (GRPO) enhances large language models by estimating advantages across a group of sampled trajectories. However, mapping these trajectory-level advantages to policy updates requires aggregating token-level probabilities within each sequence. Relying on a fixed aggregation mechanism for this step fundamentally limits the algorithm's adaptability. Empirically, we observe a critical trade-off: certain fixed aggregations frequently suffer from training collapse, while others fail to yield satisfactory performance. To resolve this, we propose \textbf{H\"{o}lderPO}, a generalised policy optimisation framework unifying token-level probability aggregation via the H\"{o}lder mean. By explicitly modulating the parameter $p$, our framework provides continuous control over the trade-off between gradient concentration and variance bounds. Theoretically, we prove that a larger $p$ concentrates the gradient to amplify sparse learning signals, whereas a smaller $p$ strictly bounds gradient variance. Because no static configuration can universally resolve this concentration-stability trade-off, we instantiate the framework with a dynamic annealing algorithm that progressively schedules $p$ across the training lifecycle. Extensive evaluations demonstrate superior stability and convergence over existing baselines. Specifically, our approach achieves a state-of-the-art average accuracy of $54.9\%$ across multiple mathematical benchmarks, yielding a substantial $7.2\%$ relative gain over standard GRPO and secures an exceptional $93.8\%$ success rate on ALFWorld.
 
\end{abstract}

\vspace{-4mm}
\begin{figure}[htbp]
    \centering
    \includegraphics[width=1\linewidth]{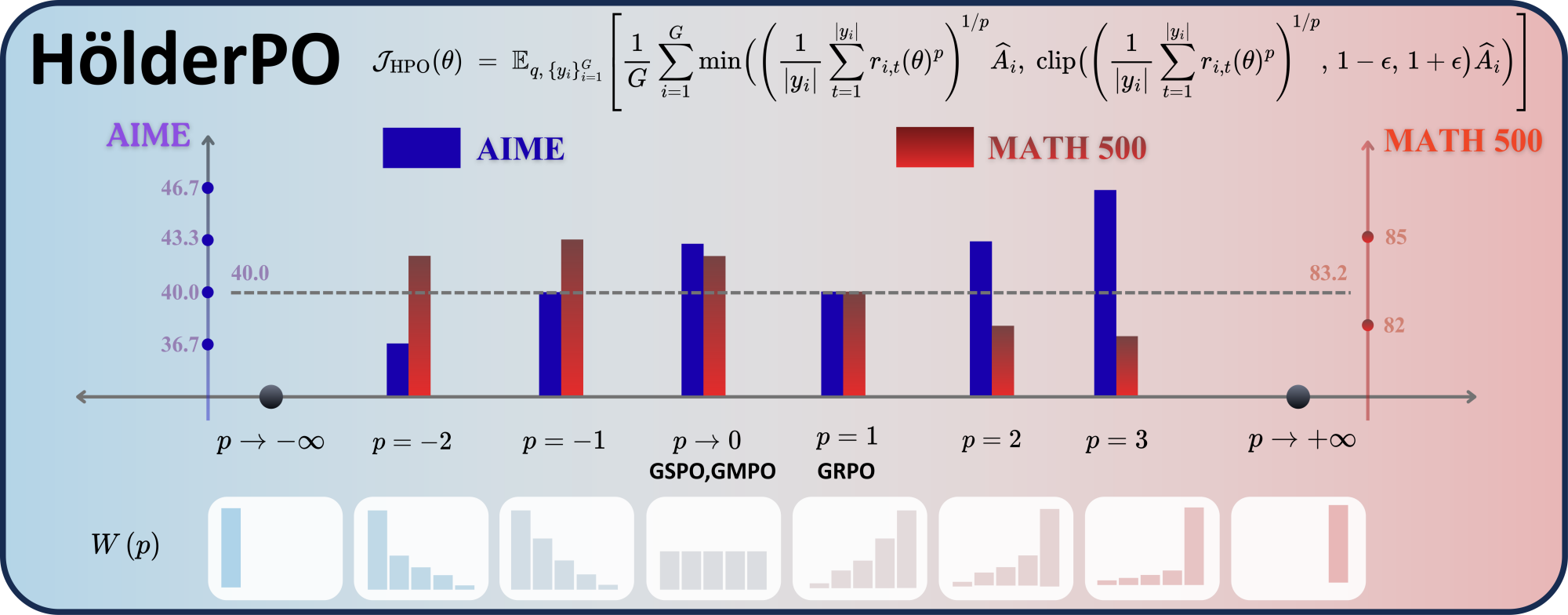}
    \vspace{-5mm}
    \caption{\textbf{H\"{o}lderPO unifies token-level aggregation under a single parameter $p$.} The objective at the top generalises GRPO by replacing its arithmetic mean over token-level importance ratios with the H\"{o}lder mean of order $p \in \mathbb{R}$, recovering GRPO ($p=1$) and GMPO/GSPO ($p \to 0$) as special cases. The bar chart reports accuracy on AIME24 (blue, sparse signal) and MATH500 (red, dense signal), with dashed lines marking GRPO baselines. Bottom: the token weight distribution $W(p)$, with each panel ordering tokens from small (left) to large (right) importance ratio.}
    \label{fig:placeholder}
\end{figure}

\section{Introduction}
Reinforcement Learning (RL) has emerged as a key technique for advancing the alignment and complex reasoning capabilities of Large Language Models (LLMs)~\citep{ouyang2022training,schulman2017proximal}. Recently, Group Relative Policy Optimisation (GRPO) has emerged as a highly effective and compute-efficient algorithm, largely driving the success of reasoning models like DeepSeek-R1~\citep{shao2024deepseekmath}. GRPO operates by estimating advantages across a group of sampled trajectories, substantially reducing training overhead by eliminating the need for an external critic model. However, mapping these trajectory-level advantages to policy updates requires aggregating token-level probabilities within each sequence. As the demand for solving long-horizon reasoning tasks grows, the fundamental mechanics of this fixed aggregation step have come under scrutiny~\citep{liu2025understanding}. Existing algorithms rigidly rely on static aggregation functions: standard GRPO ($p = 1$) defaults to the Arithmetic Mean, while recent variants like GMPO~\citep{zhao2025geometric} and GSPO~\citep{zheng2025group} ($p \rightarrow 0$) attempt to mitigate variance by employing the Geometric Mean.

Despite their empirical success, these fixed aggregation mechanisms implicitly impose a static optimisation landscape, limiting their adaptability across long-horizon reasoning tasks of varying signal density --- the regime in which the trade-off we identify becomes acute. Through empirical investigation, we observe a critical trade-off: certain fixed aggregations frequently suffer from training collapse, while others fail to yield satisfactory performance. Specifically, on \textbf{dense-signal tasks} (where supervision is distributed across many tokens, e.g., MATH~\citep{hendrycks2021measuring}), standard GRPO ($p = 1$) disproportionately over-weights minor token-level errors, inducing high-variance gradient updates that can lead to training collapse. Conversely, on \textbf{sparse-signal tasks} (where correct reasoning is concentrated in rare, high-magnitude tokens, e.g., AIME~\citep{numina_math_datasets}), GSPO ($p \rightarrow 0$) overly smooths the probability ratios, suppressing the effective use of these rare ``aha moments''. Figure~\ref{fig:placeholder} 
visualises this divergence: AIME24 accuracy peaks at $p=3$ while MATH500 
peaks at $p=-1$, with the bottom row showing how the underlying token 
weight distribution $W(p)$ deforms across the $p$-axis. Essentially, 
there is no ``silver bullet'' among static mean functions; the optimal 
probability aggregation is not a constant, but rather a function of 
task signal density and the model's training progression.

To address these fundamental limitations, we propose \textbf{H\"{o}lderPO}, a generalised policy optimisation framework unifying token-level probability aggregation via the adaptable H\"{o}lder mean ($p$-norm). By explicitly modulating the parameter $p$, the framework provides continuous control over the trade-off between gradient concentration and variance bounds. Theoretically, we prove a two-sided trade-off in $p$: a larger $p$ concentrates the gradient weight distribution on a small subset of tokens, amplifying the effective use of rare informative learning signals at the cost of looser variance bounds. Conversely, a smaller $p$ strictly tightens the variance of the policy gradient estimator, ensuring training stability at the cost of weakening the response to those same sparse signals. Because no static configuration can simultaneously realise both endpoint advantages, we instantiate the H\"{o}lderPO framework with a dynamic annealing algorithm. By progressively scheduling $p$ from a higher positive value to a negative value during training, this algorithm seamlessly transitions the model from aggressive signal amplification in the early stages to variance-controlled convergence in the later stages.

Extensive empirical evaluations across a comprehensive suite of complex reasoning and decision-making benchmarks strongly validate our claims. Built upon the \textbf{Qwen2.5-Math-7B} base~\citep{yang2024qwen25mathtechnicalreportmathematical}, our ablation studies first confirm the task-specific sensitivity of $p$: sparse-signal tasks strictly favour higher $p$ values for aggressive signal amplification, whereas dense-signal tasks benefit from lower (possibly negative) $p$ values for gradient stability. Crucially, when explicitly setting $p=3$, our approach effectively breaks the existing performance ceiling on the highly challenging AIME benchmark, surpassing the previous $43.3\%$ accuracy record to achieve $46.7\%$. Building on these insights, by employing our dynamic annealing algorithm, H\"olderPO unifies these advantages without incurring additional computational overhead. Consequently, our approach achieves a state-of-the-art average accuracy of \textbf{54.9\%} across five mathematical benchmarks (AIME, AMC, MATH, Minerva~\citep{lewkowycz2022solving}, and OlympiadBench~\citep{he-etal-2024-olympiadbench}), a $7.2\%$ relative gain over standard GRPO that surpasses concurrent token-aggregation methods including PMPO~\citep{zhao2026one}. Beyond mathematical reasoning, this dynamic adaptability extends to open-world agentic tasks, securing an exceptional \textbf{93.8\%} success rate on the ALFWorld benchmark~\citep{shridhar2020alfworld}, a $28.8\%$ relative gain over GRPO ($72.8\%$).

In summary, our main contributions are as follows:
\begin{itemize}
    \item \textbf{The H\"{o}lderPO Framework:} We propose H\"{o}lderPO, a generalised policy optimisation framework that dynamically unifies various mean-based probability aggregations through the adaptable H\"{o}lder parameter $p$.
    
    \item \textbf{Theoretical Foundation:} We theoretically characterise the two-sided role of $p$ in long-horizon reasoning: a larger $p$ concentrates gradient weight to amplify sparse learning signals, whereas a smaller $p$ strictly bounds gradient variance to ensure training stability. No fixed $p$ realises both endpoint advantages simultaneously, motivating dynamic scheduling.
    
    \item \textbf{Empirical Breakthroughs and SOTA Performance:} Empirically, explicitly employing a large $p=3$ breaks the existing performance ceiling on the highly challenging AIME benchmark. Furthermore, instantiating the framework with a dynamic $p$-annealing algorithm achieves state-of-the-art results, securing a $54.9\%$ average accuracy across five mathematical benchmarks and an exceptional $93.8\%$ success rate on ALFWorld agentic tasks.
\end{itemize}

\section{Related Work}
\label{sec:related_work}
\textbf{Reinforcement Learning for Complex Reasoning.}
Reinforcement Learning (RL) has become the cornerstone of LLM post-training. While foundational work used RLHF for behavioural alignment~\citep{ouyang2022training, stiennon2020learning}, recent advances focus on complex reasoning via RLVR~\citep{wen2025reinforcement}, pioneered by OpenAI o-series~\citep{jaech2024openai} and DeepSeek-R1~\citep{guo2025deepseek, shao2024deepseekmath}, inspiring both proprietary~\citep{comanici2025gemini, yang2025qwen3} and open-source successors. GRPO~\citep{shao2024deepseekmath} has emerged as the dominant algorithm; its broader ecosystem of refinements is surveyed in Appendix~\ref{app:extended_related_work}. 

\textbf{Token-Level Aggregation.}
The aggregation operator that maps token-level importance ratios to a sequence-level signal is the most direct analogue of our framework. GRPO uses the arithmetic mean, while GMPO~\citep{zhao2025geometric} and GSPO~\citep{zheng2025group} adopt the geometric mean to mitigate outlier variance. Concurrent PMPO~\citep{zhao2026one} parameterises a power-mean exponent $p\in[0,1]$, adapted per-trajectory via clip-aware ESS matching. Our framework differs in two key respects: (i) we extend $p$ to the \emph{full real range}, identifying $p<0$ as a qualitatively distinct \emph{inverse-concentration} phase unexplored by prior work; and (ii) we adapt $p$ along the \emph{temporal} axis (across training steps) rather than per trajectory, enabling complementary roles for early-stage signal amplification and late-stage variance contraction.

\textbf{Token Reweighting via Auxiliary Signals.}
A parallel line reweights tokens within each rollout using signals \emph{external} to the importance ratio: token entropy~\citep{wang2025beyond, yu2026erpo, simoni2025gtpo}, token probability~\citep{yang2025not}, hidden contributions to response confidence~\citep{deng2025token}, or selective KL masking~\citep{lin2025token}. These approaches are orthogonal to ours and could in principle be combined with H\"{o}lderPO's power-mean aggregation.

\section{H\"{o}lderPO: A Generalised Aggregation Framework}
\label{sec:method}

When adapting PPO for LLMs, particularly for training long-horizon reasoning tasks,  group-based variants like GRPO~\citep{shao2024deepseekmath} formulate the unclipped objective as
$$\mathcal{J}(\theta) = \mathbb{E}_{x, \{y_i\}} \left[ \frac{1}{G} \sum_{i=1}^G \rho_i(\theta) \widehat{A}_i \right].$$
Here, $\rho_i(\theta)$ is the sequence-level surrogate term, which can be regarded as an \emph{aggregation operator}--- a functional projection that compresses the full sequence of token-level importance ratios $\{r_{i,t}(\theta)\}_{t=1}^{|y_i|}$ into a well-behaved sequence-level scalar. While GRPO uses the arithmetic mean, GMPO~\citep{zhao2025geometric} and GSPO~\citep{zheng2025group} use geometric mean. However, these methods only represent static, isolated points within a broader, continuous spectrum of aggregation operators. 

In section~\ref{sec:framework}, we propose \textbf{H\"{o}lder Policy Optimisation}, a generalised framework that parameterises the aggregation operators by a single scalar $p \in \mathbb{R}$ via the Hölder mean. Pivotally, the single parameter $p$ governs a trade-off between \textbf{gradient concentration} (defined in Section~\ref{sec:concentration}), which selectively amplifies targeted learning signals, and the \textbf{variance bound} (analysed in Section~\ref{sec:variance}), which ensures training stability. Finally, the interplay between these two competing properties motivates our \textbf{dynamic scheduling strategy} in Section~\ref{sec:dynamic}.


\subsection{Aggregation via the H\"{o}lder Mean}
\label{sec:framework}

Given a prompt context $x$ and a rollout $y_i$ sampled from $\pi_{\theta_{\text{old}}}$, the token-level importance ratio for $t$-th token is $r_{i,t}(\theta) \;=\; \frac{\pi_\theta(y_{i,t} \mid x, y_{i,<t})}{\pi_{\theta_{\text{old}}}(y_{i,t} \mid x, y_{i,<t})}$. Rather than relying on a fixed operator, H\"{o}lderPO generalises the token-level aggregation by the H\"{o}lder mean of order $p$:
\begin{equation}
\label{eq:rho-def}
    \rho_{i,p}(\theta) \;=\; 
    \begin{cases} 
        \left( \frac{1}{|y_i|} \sum_{t=1}^{|y_i|} r_{i,t}(\theta)^p \right)^{\!1/p}, & \text{if } p \neq 0, \\[2ex]
        \exp\!\left( \frac{1}{|y_i|} \sum_{t=1}^{|y_i|} \log r_{i,t}(\theta) \right), & \text{if } p = 0.
    \end{cases}
\end{equation}
Due to the limit for $p\rightarrow 0$, we take the geometric mean for $p=0$ branch (see Appendix~\ref{F4}). The H\"{o}lderPO objective then takes the standard PPO-style form with sequence-level clipping:
\begin{equation}
\label{eq:hpo-obj}
    \mathcal{J}_{H_s}(\theta) \;=\; \mathbb{E}_{x,\,\{y_i\}_{i=1}^G}\!\left[ \frac{1}{G} \sum_{i=1}^G \min\!\Big( \rho_{i,p}(\theta) \widehat{A}_i,\; \mathrm{clip}\big(\rho_{i,p}(\theta),\, 1-\epsilon,\, 1+\epsilon\big) \widehat{A}_i \Big) \right].
\end{equation}
Here $\widehat{A}_i$ is the advantage estimator and $\epsilon$ is the clipping threshold. The reason we choose sequence-level clipping is to control gradient variance (see Appendix \ref{app:token_clip} and \ref{I2}). Specifically, $p = 1$ recovers GRPO (Appendix~\ref{tlclipping}), while $p=0$ recovers GSPO (Appendix~\ref{slclipping}). To analyse how $p$ shapes the optimisation, we study $\nabla_\theta \rho_{i,p}(\theta)$, which governs the direction of the policy gradients (see Eq. (\ref{pgunclip}), (\ref{tlclippg}), (\ref{slclippg})). A direct calculation (Appendix~\ref{uncliformulas}) yields
\begin{equation}
\label{eq:rho-grad}
    \nabla_\theta \rho_{i,p}(\theta) \;=\; \rho_{i,p}(\theta) \sum_{t=1}^{|y_i|} W_{i,t}(p) \,\cdot\, \nabla_\theta \log \pi_\theta(y_{i,t} \mid x, y_{i,<t}) \quad W_{i,t}(p) \;\coloneqq\; \frac{r_{i,t}(\theta)^p}{\sum_{k=1}^{|y_i|} r_{i,k}(\theta)^p},
\end{equation}
where the per‑token gradient weights $W_{i,t}(p)$ form a probability distribution denoted by $W_i^p$. Crucially, varying $p$ does not alter the per-token log-gradient directions; instead, it solely reweights the directions and modulates the weight distribution.





\subsection{Distributional Deformation and Gradient Concentration}
\label{sec:concentration}
We formalise the gradient concentration by analysing $W_i^p$ through two complementary lenses. Locally, Theorem~\ref{thm:amplification} (Appendix~\ref{localpro}) shows an increasingly strict token-level weight allocation: as $p$ grows, maximal-ratio tokens monotonically dominate. Non-maximal ones may briefly gain weight before strictly decaying to zero once the rising threshold $\mu_i(p)$ surpasses their log-ratios. Globally, our next result (Appendix~\ref{globalpro}) captures the dispersion of the entire weight distribution by Shannon entropy.

\begin{theorem}
\label{thm:deformation}
Assume the sequence $y_i$ contains at least two tokens with distinct importance ratios. Then Shannon entropy of the weight distribution attains its global maximum at $p = 0$, where $W^0_i = \tfrac{1}{|y_i|} \mathrm{Unif}$, and strictly decreases as $|p|$ increases. Moreover, as $p \to \pm\infty$, $W_i^p$ concentrates uniformly on the subset $\mathcal{T}^{+} = \arg\max_t r_{i,t}(\theta)$ and $\mathcal{T}^{-} = \arg\min_t r_{i,t}(\theta)$, respectively.
\end{theorem}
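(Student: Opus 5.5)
Write $n=|y_i|$ and $\ell_t=\log r_{i,t}(\theta)$. The plan is to view $W_i^p$ as the Gibbs (softmax) distribution $W_{i,t}(p)=e^{p\ell_t}/Z(p)$, with log-partition function $\Lambda(p)=\log\sum_{k}e^{p\ell_k}$. The Shannon entropy is then
\[
H(p)=-\sum_t W_{i,t}(p)\log W_{i,t}(p)=\Lambda(p)-p\,\Lambda'(p),
\]
where $\Lambda'(p)=\mathbb{E}_{W^p}[\ell]=:\mu_i(p)$ is the weighted mean log-ratio. This is the same threshold $\mu_i(p)$ that appears in Theorem~\ref{thm:amplification}.

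\textbf{Maximum at $p=0$.} At $p=0$ every weight equals $1/n$, so $W_i^0$ is uniform and $H(0)=\log n$. This is the largest entropy any distribution on $n$ points can have, so $p=0$ is a global maximum.

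\textbf{Monotonicity in $|p|$.} Differentiate the entropy formula:
\[
H'(p)=\Lambda'(p)-\Lambda'(p)-p\,\Lambda''(p)=-p\,\Lambda''(p).
\]
The second derivative is a variance, $\Lambda''(p)=\mathrm{Var}_{W^p}(\ell)$. By assumption at least two $\ell_t$ are distinct, and every $W_{i,t}(p)$ is strictly positive for finite $p$, so this variance is strictly positive for every finite $p$. Hence $H'(p)<0$ for $p>0$ and $H'(p)>0$ for $p<0$. So $H$ is strictly decreasing in $|p|$ on each half-line, which is the precise meaning of ``strictly decreases as $|p|$ increases.''

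\textbf{Limits $p\to\pm\infty$.} Let $\ell^*=\max_t\ell_t$. Factor $e^{p\ell^*}$ out of numerator and denominator:
\[
W_{i,t}(p)=\frac{e^{p(\ell_t-\ell^*)}}{|\mathcal{T}^+|+\sum_{k\notin\mathcal{T}^+}e^{p(\ell_k-\ell^*)}}.
\]
Each $t\notin\mathcal{T}^+$ has $\ell_t-\ell^*<0$, so its exponential tends to $0$ as $p\to+\infty$. Therefore $W_{i,t}(p)\to 1/|\mathcal{T}^+|$ for $t\in\mathcal{T}^+$ and $\to 0$ otherwise. The limit is the uniform distribution on $\mathcal{T}^+$, which is the sense of ``concentrates uniformly.'' The case $p\to-\infty$ is symmetric, using $\ell_*=\min_t\ell_t$ and $\mathcal{T}^-$.

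\textbf{Main obstacle.} The only real care needed is getting strictness. One must check that $\mathrm{Var}_{W^p}(\ell)>0$ for all finite $p$, which follows from full support of $W^p$ together with the distinctness assumption. The entropy expression should also be derived cleanly via $\log W_{i,t}=p\ell_t-\Lambda(p)$.

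Finally, the statement's notation ``$W_i^0=\tfrac{1}{|y_i|}\mathrm{Unif}$'' should be read as the uniform weights $1/|y_i|$.
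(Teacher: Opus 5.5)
Your proof is correct and follows essentially the same route as the paper: both establish $\partial_p \mathcal{H} = -p\,\mathrm{Var}_{W_i^p}(\log r_{i,t})$, use full support plus distinctness of the ratios for strict positivity of the variance, and obtain the $p\to\pm\infty$ limits by factoring out the extreme ratio. Your log-partition identity $H=\Lambda-p\Lambda'$ is a cleaner way to reach that derivative than the paper's direct differentiation of $-\sum_t W_{i,t}\ln W_{i,t}$ with $\ln W_{i,t}=p\ln r_{i,t}-\ln Z(p)$ substituted, and your explicit reading of ``strictly decreases as $|p|$ increases'' on each half-line separately is a useful clarification, since $H(p)\neq H(-p)$ in general.
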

Together, these dual perspectives formally characterise gradient concentration—the skewing of the weight distribution toward a specific subset of tokens. By governing the intensity and target of this skew, $p$ shapes the gradient contributions in three distinct regimes:

\textbf{{Upward Concentration ($p > 0$)}.} A positive $p$ drives the gradient concentration toward tokens with relatively high importance ratios. A prevailing view suggests that RL for reasoning  primarily acts to sharpen the pre-existing knowledge distribution of the base model (e.g., \cite{zhou2023lima, li2024rain, yue2025does}). Under this view, an importance ratio $>1$ serves as a confidence signal that, ideally, highlights the critical bottleneck tokens within reasoning steps. In long-horizon tasks, where such high-confidence tokens are sparse \citep{zelikman2022star, lightman2023lets, yao2023tree}, setting $p > 0$ explicitly amplifies their weight to prevent their gradients from being diluted.



\paragraph{Uniform Dispersion ($p \rightarrow 0$).} As $p$ decreases, the specific contributions of individual tokens are increasingly flatten out. At $p=0$,  every token contributes equally.

\paragraph{Downward Concentration ($p < 0$).}
A negative $p$ inverts the gradient allocation, aggressively upweighting tokens with importance ratios $< 1$, which signal current model's hesitation and pinpoint unconventional yet effective decision points in successful trajectories. Consequently, a moderately negative $p$  promotes reasoning diversity by forcing the model to consolidate alternative pathways. More details about the relation between our gradient concentration mechanism and exploration-exploitation trade-off can be found in Appendix \ref{app:exploration}.

\subsection{Policy Gradient Variance Bound}
\label{sec:variance}
Next, we analyse the variance of the policy gradient estimator induced by (\ref{eq:hpo-obj}). In long-horizon reasoning, while concentration enables the amplification of targeted signals, it risks magnifying gradient variance. The next theorem (proof is in Appendix~\ref{I2}) shows that such selectivity can destabilise convergence if left uncontrolled.

\begin{theorem}
\label{thm:variance}
Let $\widehat{\nabla}_\theta \mathcal{J}_{H_s}$ (Eq.~(\ref{seqlevelest})) denote the unbiased mini-batch estimator induced by (\ref{eq:hpo-obj}). Assume $\|\nabla_\theta \log \pi_\theta(y_{i,t} \mid x, y_{i,<t})\| \le M$ for all tokens within the batch, the variance admits the  bound
\begin{equation}
\label{eq:var-bound}
    \|\mathrm{Var}(\widehat{\nabla}_\theta \mathcal{J}_{H_s})\| \;\le\; \frac{M^2}{B}\, \mathbb{E}\!\left[ \widehat{A}_i^{\,2}\, \rho_{i,p}^2(\theta) \right],
\end{equation}
which is monotonically increasing in $p$ for all $p \in \mathbb{R}$, where $B$ is the batch size.
\end{theorem}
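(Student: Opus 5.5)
The proof has two parts: the variance bound itself, and its monotonicity in $p$.

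\textbf{Variance bound.} I would write the mini-batch estimator as an average of $B$ i.i.d.\ per-sample terms $g_i=\widehat{A}_i\,\nabla_\theta\rho_{i,p}(\theta)$. Within the unclipped region this is exactly the gradient of the surrogate. In the clipped region the $\min$/$\mathrm{clip}$ construction zeroes the gradient, so there $\|g_i\|$ is only smaller. Independence then gives
\[
\mathrm{Var}(\widehat{\nabla}_\theta \mathcal{J}_{H_s})=\tfrac{1}{B}\mathrm{Var}(g_i).
\]
I would bound the operator norm of the covariance by its trace, and the trace by the second moment:
\[
\|\mathrm{Var}(g_i)\|\le \mathbb{E}\|g_i\|^2 .
\]
Next I would invoke Eq.~(\ref{eq:rho-grad}). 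It writes $\nabla_\theta\rho_{i,p}$ as $\rho_{i,p}$ times a convex combination of token score vectors, because the weights $W_{i,t}(p)$ form a probability distribution. The triangle inequality then gives
\[
\Big\|\sum_t W_{i,t}(p)\nabla\log\pi_\theta\Big\|\le M,
\]
so $\|g_i\|^2\le M^2\widehat{A}_i^2\rho_{i,p}^2$. Taking expectations yields (\ref{eq:var-bound}). The useful feature here is that the concentration of $W_i^p$ from Theorem~\ref{thm:deformation} never enters the bound: all of its $p$-dependence sits in $\rho_{i,p}$.

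\textbf{Monotonicity.} Since $\rho_{i,p}>0$ and $\widehat{A}_i^2\ge0$ does not depend on $p$, it suffices that $p\mapsto\rho_{i,p}$ is nondecreasing pointwise for every fixed sample; monotonicity of the expectation then follows. This is the classical power-mean inequality. I would prove it by differentiating $\log\rho_{i,p}=\frac1p\log\big(\frac1n\sum_t r_t^p\big)$ for $p\ne0$. A direct computation gives
\[
\frac{d}{dp}\log\rho_{i,p}=\frac{1}{p^2}\Big(\sum_t W_{i,t}(p)\log r_t^p-\log\tfrac1n\sum_t r_t^p\Big)=\frac{1}{p^2}\,\mathrm{KL}\big(W_i^p\,\|\,\mathrm{Unif}\big)\ge0 .
\]
This ties neatly back to Theorem~\ref{thm:deformation}: the derivative is zero only when $W_i^p$ is uniform, i.e.\ when all ratios are equal. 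Continuity at $p=0$, where $\rho_{i,p}$ equals the geometric mean (Appendix~\ref{F4}), extends monotonicity across the whole real line. Strict increase holds whenever at least two tokens have distinct ratios.

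\textbf{Expected obstacle.} The delicate point is the clipping and the meaning of ``unbiased estimator''. Clipping makes the per-sample gradient $p$-dependent through an indicator, so the derived bound is really the unclipped-regime bound, valid as an upper bound overall. Monotonicity is claimed for the bound, not for the true variance, and I would state that explicitly. A second subtlety is exchanging differentiation and expectation, or merely comparing pointwise. Pointwise comparison suffices, because expectation preserves pointwise inequalities once the relevant expectations are finite. I would assume finiteness, which holds for bounded ratios.
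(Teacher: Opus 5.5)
Your proposal is correct and follows the paper's proof essentially step for step. The shared steps are the i.i.d.\ decomposition giving the $1/B$ factor, the variance bounded by the second moment, and the triangle inequality on the $W_{i,t}(p)$-weighted score vectors giving $\|\nabla_\theta\rho_{i,p}\|\le M\rho_{i,p}$. Then come the sequence-level clipping indicator, which only shrinks the per-sample norm, and the pointwise monotonicity of the H\"{o}lder mean in $p$. The differences are only refinements: you make explicit that the covariance's operator norm is bounded by its trace, and you prove the power-mean inequality via $\frac{d}{dp}\log\rho_{i,p}=p^{-2}\,\mathrm{KL}(W_i^p\,\|\,\mathrm{Unif})$, where the paper simply cites the generalised mean inequality.
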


In addition, if we assume approximate orthogonality of gradients of tokens within sequences (Assumption \ref{orthogonal}), we prove the variance itself has a global minimum at some $p^* \le 0$. (Theorem \ref{variance_optimal}). 


\paragraph{Trade-off with concentration.}
Theorems~\ref{thm:deformation} and~\ref{thm:variance} highlight a structural trade-off controlled by the scalar $p$: driving $p$ upward isolates targeted pivotal signals, but incurs the cost of a looser variance bound. While shifting $p$ downward strictly tightens this bound, it dilutes these critical signals or redirects the concentration entirely.  In long-horizon reasoning, this trade-off becomes a bottleneck: we must amplify sparse signals without letting variance scale uncontrollably across the entire trajectory. Therefore, \textbf{no fixed $p$ can be uniformly optimal}, since the optimal balance between these two requirements varies depending on the specific task and training stage.

\subsection{A Dynamic $p$-Scheduling Strategy}
\label{sec:dynamic}

The trade-off above motivates a dynamic schedule for long-horizon reasoning tasks that monotonically decays $p$ from a positive initial value $p_{\text{high}}$ to a low (possibly negative) terminal value $p_{\text{low}}$ over the course of training: $p(0) = p_{\text{high}}, p(T) = p_{\text{low}}, \text{and} \ p(t_1) \ge p(t_2) \;\;\forall\; 0 \le t_1 < t_2 \le T.$ The early phase leverages positive concentration to amplify sparse, high-magnitude signals signals crucial for initial policy improvement. In the late phase, the schedule focuses on contracting the variance bound to guarantee stable convergence. Where $p_{\text{low}} < 0$, the algorithm utilises inverse concentration, moderately redirecting the gradient towards underemphasised tokens to foster reasoning diversity.

\begin{theorem}
\label{thm:dynamic}
Let $V(p) \coloneqq \mathbb{E}[\widehat{A}_i^{\,2}\, \rho_{i,p}^2(\theta)]$ denote the term in the bound in Eq.(\ref{eq:var-bound}), and let $p_{\text{stat}} \in [p_{\text{low}}, p_{\text{high}}]$ be any fixed parameter. Given a $y_i$ of length $n$, the dynamic schedule satisfies:

\textbf{1. Early-phase signal amplification:}  If  $y_i$ has a high-ratio token $t^{*}$ with $r_{i,t^{*}}\gg 1$, while the other tokens have constant-bounded ratios. Under the pre-saturation condition $r_{i,t^{*}}^{p_{\text{high}}} \ll n - 1$, shifting from $p_{\text{stat}}$ to $p_{\text{high}}$ exponentially amplifies its gradient weight: there exists a constant $C > 0$ such that
\begin{equation}
\frac{W_{i,t^{*}}(p_{\text{high}})}{W_{i,t^{*}}(p_{\text{stat}})} \;\ge\; C \cdot r_{i,t}^{\,p_{\text{high}} - p_{\text{stat}}}.   
\end{equation}

\textbf{2. Late-phase variance contraction:} The terminal variance bound is strictly contracted:
\begin{equation}
    V(p_{\text{low}}) \;<\; V(p_{\text{stat}}).
\end{equation}

\end{theorem}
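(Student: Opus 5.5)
The theorem splits into two independent parts: a direct computation with the weights $W_{i,t}(p)$ from Eq.~(\ref{eq:rho-grad}) for Part 1, and monotonicity of the variance-bound term for Part 2.

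\textbf{Part 1.} Write $r^{*}=r_{i,t^{*}}$ and $S_q\coloneqq\sum_{k\neq t^{*}} r_{i,k}^{q}$. Then
\[
W_{i,t^{*}}(q)=\frac{(r^{*})^{q}}{(r^{*})^{q}+S_q},
\qquad
\frac{W_{i,t^{*}}(p_{\text{high}})}{W_{i,t^{*}}(p_{\text{stat}})}=(r^{*})^{p_{\text{high}}-p_{\text{stat}}}\cdot\frac{(r^{*})^{p_{\text{stat}}}+S_{p_{\text{stat}}}}{(r^{*})^{p_{\text{high}}}+S_{p_{\text{high}}}} .
\]
It therefore suffices to bound the second factor below by a constant $C>0$.

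By hypothesis the other ratios satisfy $0<a\le r_{i,k}\le b$ with constants $a,b$. Consequently, for any fixed $q$ in the compact range $[p_{\text{low}},p_{\text{high}}]$, we have $c_1(n-1)\le S_q\le c_2(n-1)$, where $c_1=\min(a^q,b^q)$ and $c_2=\max(a^q,b^q)$. These constants depend only on $a$, $b$ and the endpoints $p_{\text{low}},p_{\text{high}}$.

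\emph{Numerator.} Drop the $(r^{*})^{p_{\text{stat}}}$ term, which is nonnegative, to get a lower bound of $S_{p_{\text{stat}}}\ge c_1(n-1)$.

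\emph{Denominator.} The pre-saturation condition, read as $(r^{*})^{p_{\text{high}}}\le \kappa(n-1)$ for some constant $\kappa$, gives an upper bound of $(\kappa+c_2)(n-1)$.

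Combining these, the second factor is at least $C=c_1/(\kappa+c_2)$, which proves the inequality.

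The only subtle point is making ``$\ll$'' and ``constant-bounded'' precise enough that $C$ is uniform. I would state these readings explicitly as the interpretation of the hypotheses. I would also note that the displayed exponent base $r_{i,t}$ should read $r_{i,t^{*}}$.

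\textbf{Part 2.} Theorem~\ref{thm:variance} asserts that $V(p)$ is monotonically increasing in $p$. That statement is not strict, so I would re-derive strict monotonicity directly rather than rely on it. The route has three steps.

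1. The power-mean inequality states that for fixed $\{r_{i,t}\}$, the map $p\mapsto\rho_{i,p}$ is nondecreasing. It is strictly increasing whenever the ratios are not all equal; the standard proof is Jensen's inequality applied to the strictly convex function $x\mapsto x^{q/p}$. Continuity at $p=0$ comes from the limit in Appendix~\ref{F4}.

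2. Hence $\widehat{A}_i^2\rho_{i,p}^2$ is pointwise nondecreasing in $p$. Taking expectations gives $V(p_{\text{low}})\le V(p_{\text{stat}})$ for any $p_{\text{stat}}\ge p_{\text{low}}$.

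3. For strictness, I need $p_{\text{stat}}>p_{\text{low}}$; the case $p_{\text{stat}}=p_{\text{low}}$ is trivial equality and must be excluded. I also need that the event $\{\widehat{A}_i\neq 0$ and the ratios of $y_i$ are not all equal$\}$ has positive probability. This is the same nondegeneracy assumption used in Theorem~\ref{thm:deformation}. On that event the integrand is strictly smaller at $p_{\text{low}}$, so the expectation is strictly smaller.

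The main obstacle is exactly this strictness. I would state it by taking $p_{\text{stat}}\in(p_{\text{low}},p_{\text{high}}]$ and adding the positive-probability nondegeneracy condition. Finiteness of $V$ follows because clipping or bounded ratios keep $\rho$ bounded.
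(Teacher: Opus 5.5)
Your proposal is correct and follows the paper's proof essentially step for step. Part 1 uses the same factorisation $\frac{W_{i,t^{*}}(p_{\text{high}})}{W_{i,t^{*}}(p_{\text{stat}})} = R^{p_{\text{high}}-p_{\text{stat}}}\cdot\frac{R^{p_{\text{stat}}}+S(p_{\text{stat}})}{R^{p_{\text{high}}}+S(p_{\text{high}})}$, drops $R^{p_{\text{stat}}}$, and combines $S(\cdot)=\Theta(n-1)$ with pre-saturation, while Part 2 is the same power-mean pointwise monotonicity followed by taking expectations. Your added care tightens the argument without changing it: explicit constants, the $r_{i,t}\to r_{i,t^{*}}$ typo, and replacing the paper's tacit pointwise strictness by $p_{\text{stat}}>p_{\text{low}}$ plus a positive-probability nondegenerate event, which matters since e.g.\ all ratios equal $1$ at $\theta=\theta_{\text{old}}$.
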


This theorem (proof in Appendix~\ref{app:thm3}) reveals that any static parameter $p_{\text{stat}}$, the standard paradigm in current GRPO-based methods, is a compromise for long-horizon reasoning tasks: it must sacrifice either early-stage signal amplification (if $p$ is low) or late-stage variance control (if $p$ is high). Our schedule bypasses the dilemma, dynamically allocating required mechanism to each training phase.

Figure~\ref{fig:logratio} provides direct visual support for this 
choice: the per-step ratio envelopes under static $p \in \{+2, 0, -2\}$ 
illustrate how decreasing $p$ monotonically tightens the gap between 
the largest and smallest token-level ratios, and our linear schedule 
inherits the early-stage concentration of $p=+2$ while converging to the 
controlled regime of $p=-2$.




\section{Experiment}
\label{sec:experiment}
To empirically validate the effectiveness of H\"{o}lderPO, we evaluate our method against state-of-the-art policy optimisation baselines on mathematical reasoning and agentic benchmarks. Our experiments are designed to follow a clear logical progression: (1) revealing the task-specific sensitivity of the $p$ parameter on distinct benchmarks, (2) demonstrating how dynamic scheduling resolves the concentration--stability trade-off identified in Section~\ref{sec:method}, and (3) comparing our overall performance against established baselines.

\subsection{Implementation Details}
\textbf{Model.} We evaluate our framework on two task families: mathematical reasoning and agentic decision-making. For mathematical reasoning, following Dr.GRPO~\citep{liu2025understanding}, we cover a broad spectrum of base models ranging from 1.5B to 8B parameters, including the Qwen2.5-Math series (1.5B and 7B)~\citep{yang2024qwen25mathtechnicalreportmathematical}, DeepSeek-R1-Distill-Qwen-7B~\citep{guo2025deepseek}, and the Qwen3 series (4B and 8B)~\citep{yang2025qwen3}. For agentic tasks, we adopt Qwen2.5-1.5B-Instruct~\citep{qwen2025qwen25technicalreport} as the policy backbone.

\textbf{Training.} Our training pipeline follows two established protocols depending on the task. For mathematical reasoning, we adopt the recipe of Dr.GRPO~\citep{liu2025understanding}: training data consists of 8{,}523 problems from MATH~\citep{hendrycks2021measuring} (Levels 3--5), and each prompt is paired with 8 sampled rollouts capped at 3{,}000 tokens. Within each RL round, $\pi_{\theta_{\text{old}}}$ produces 1{,}024 trajectories, after which the current policy $\pi_\theta$ is refreshed 8 times using a mini-batch size of 128. For agentic tasks, we adhere to the GiGPO protocol~\citep{feng2025group} for both training and evaluation on ALFWorld. In terms of compute, all models are trained on 4$\times$H100 GPUs. We primarily compare H\"{o}lderPO against GRPO~\citep{shao2024deepseekmath}, Dr.GRPO~\citep{liu2025understanding}, and GMPO~\citep{zhao2025geometric} under matched configurations.

\textbf{Evaluation.} We report mathematical performance on five benchmarks that span a wide difficulty range. AIME24 contains 30 olympiad-level problems drawn from the 2024 American Invitational Mathematics Examination, while AMC provides 83 competition problems of intermediate difficulty. MATH500 is a 500-problem subset of MATH covering algebra, geometry, and number theory. Minerva~\citep{lewkowycz2022solving} consists of 272 graduate-level problems that demand multi-step derivations, and OlympiadBench (Oly.)~\citep{he-etal-2024-olympiadbench} collects 675 high-difficulty olympiad problems. For agentic evaluation, we use the six ALFWorld~\citep{shridhar2020alfworld} sub-task categories, namely Pick, Look, Clean, Heat, Cool, and Pick Two. Following Dr.GRPO~\citep{liu2025understanding}, we adopt Pass@1 as the primary metric for mathematical tasks and decode greedily with temperature 0.0, generating one sample per question. For ALFWorld, we report task success rate under the given standard evaluation protocol.

\subsection{Task-Specific Sensitivity of $p$}
\label{sec:task-sensitivity}
A fundamental premise of our work is that a static aggregation function cannot optimally solve all tasks. To illustrate this, we isolate the performance of H\"{o}lderPO across different static $p$ values on two benchmarks with distinct signal-density profiles: AIME24, where correct reasoning is concentrated in a small number of rare, high-magnitude tokens (sparse-signal regime), and MATH500, where supervision is more densely distributed across many tokens (dense-signal regime). 

\begin{table}[ht]
\centering
\small
\begin{tabular}{lcc}
\toprule
\textbf{Training Objectives} & \textbf{AIME24 (Sparse-Signal)} & \textbf{MATH500 (Dense-Signal)} \\
\midrule
GRPO ($p=1$)  & 40.0 & 83.4 \\
GMPO ($p \to 0$)  & 43.3 & 82.0 \\   
\midrule
H\"{o}lderPO ($p = -2$) & 36.7 & 84.6 \\
H\"{o}lderPO ($p = -1$) & 36.7 & \textbf{85.0} \\
H\"{o}lderPO ($p \rightarrow 0$) & 43.3 & 84.6 \\
H\"{o}lderPO ($p = 1$) & 40.0 & 83.2 \\
H\"{o}lderPO ($p = 2$) & 43.3 & 82.0 \\
H\"{o}lderPO ($p = 3$) & \textbf{46.7} & 81.8 \\
\bottomrule
\end{tabular}
\vspace{1mm}
\caption{Performance on benchmarks with distinct signal-density profiles. On AIME24, higher $p$ amplifies rare high-magnitude signals for complex reasoning. Conversely, on MATH500, a lower $p$ strictly tightens the gradient variance bound to ensure training stability, yielding superior performance on simpler tasks.}
\label{tab:combined_results}
\vspace{-5mm}
\end{table}

As detailed in Table~\ref{tab:combined_results} and visually summarised by the diverging performance curves in Figure~\ref{fig:placeholder}, the optimal $p$ value diverges significantly across the two regimes.

\textbf{Sparse-signal tasks favour high $p$.} On AIME24, where correct reasoning traces (i.e., pivotal reasoning steps) are exceptionally sparse, larger positive values of $p$ (e.g., $p \ge 2$) yield the highest accuracy. This empirically confirms Theorem~\ref{thm:deformation}: in the positive-concentration regime, the gradient weight distribution concentrates on tokens with the largest importance ratios (as visually depicted by the right-skewed $W(p)$ distributions at the bottom of Figure~\ref{fig:placeholder}), allowing the rare, high-quality reasoning steps to drive the update rather than being averaged out by the bulk of unremarkable tokens.

\textbf{Dense-signal tasks favour low $p$.} Conversely, on MATH500, where supervision is distributed across many tokens, lower values of $p$ (e.g., $p \le 0$) perform better. This is consistent with Theorem~\ref{thm:variance}: decreasing $p$ tightens the variance bound on the policy gradient estimator, preventing the high-variance updates that occur when relative-magnitude differences among many comparable tokens get over-weighted. This mechanism directly corresponds to the flatter, left-leaning $W_i^p$ distributions shown in Figure~\ref{fig:placeholder}, which systematically redistribute credit to underemphasised steps.

\subsection{Main Performance and Dynamic Scheduling}
The empirical observation that no single static $p$ yields optimal performance universally directly motivates our dynamic scheduling approach. We hypothesise that any reasoning task effectively functions as a sparse-signal task during the early stages of training. At this point, the model has yet to internalise the correct reasoning patterns, thus requiring a high $p$ for signal amplification. As the model masters the underlying logic, the task gradually transitions into a dense-signal regime, necessitating a low $p$ to ensure stable convergence. 

To validate this, we evaluate our dynamic annealing scheduler (employing a linear decay of $p$ from $2$ to $-2$) alongside the best static configuration and existing state-of-the-art baselines across a diverse suite of benchmarks.

\begin{figure}[htbp]
    \centering
    \includegraphics[width=1\linewidth]{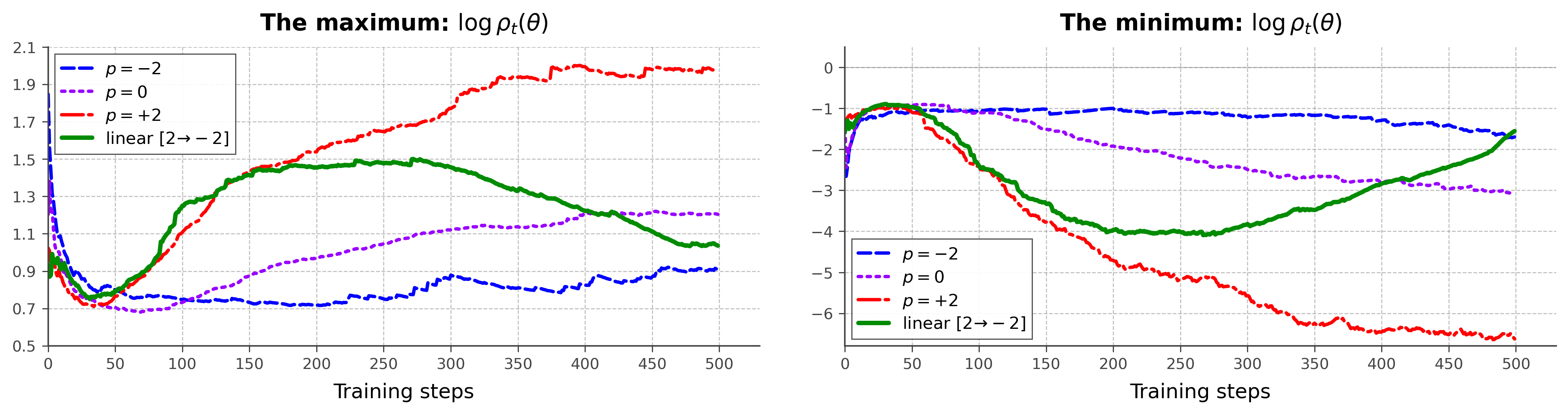}
    \vspace{-.65cm}
\caption{Token-level importance ratio $\log\rho_t(\theta)$ during training.
  \textbf{Left} and \textbf{Right} track the per-step upper and lower envelopes respectively.
  As $p$ decreases, the upper envelope drops and the lower envelope rises,
  tightening the gap monotonically. Our decaying schedule $p\!:\!2\!\to\!-2$
  (solid green) thus enables aggressive updates in the early stage and
  progressively converges to stable optimization in the later stage. Constant-$p$
  baselines ($p\!\in\!\{+2,0,-2\}$) shown as dashed/dotted/dash-dotted.}    \label{fig:logratio}
\end{figure}

Table~\ref{tab:overall_results} summarises the overall performance. While 
our best static configuration ($p \to 0$) achieves highly competitive 
average scores, it remains a single-point compromise on the 
concentration--stability trade-off. The dynamic $p$-scheduling approach 
achieves state-of-the-art results across the board: by progressively 
annealing $p$, the model leverages the early-stage signal amplification 
provided by $p = 2$, while benefiting from the strict variance contraction 
of $p = -2$ during the final convergence phase. The advantage is 
\emph{distributional} rather than pointwise: tasks whose optimal $p^{*}$ 
lies near a single endpoint may remain best served by the corresponding 
static configuration. For example, AIME24 favours static $p=3$ and MATH500 favours $p=-1$. But the schedule strictly outperforms every static setting on 
the overall task average.

\begin{table}[ht]
\centering
\resizebox{\textwidth}{!}{
\begin{tabular}{l|ccccc|c}
\toprule
\textbf{Training Objectives} & \textbf{AIME24} & \textbf{AMC} & \textbf{MATH500} & \textbf{Minerva} & \textbf{Oly.} & \textbf{Avg.} \\
\midrule
\multicolumn{7}{c}{\textit{1.5B Models}} \\
\midrule
\multicolumn{7}{l}{\textit{Base \& Instruct Models}} \\
Qwen2.5-Math-1.5B~\citep{yang2024qwen25mathtechnicalreportmathematical}          & 16.7 & 43.4 & 61.8 & 15.1 & 28.4 & 33.1 \\
Qwen2.5-Math-1.5B-Instruct~\citep{yang2024qwen25mathtechnicalreportmathematical} & 10.0 & 48.2 & 74.2 & 26.5 & 40.2 & 39.8 \\
\midrule
\multicolumn{7}{l}{\textit{RL Post-Trained Models}} \\
Oat-Zero-1.5B~\citep{liu2025understanding}              & 20.0 & 53.0 & 74.2 & 25.7 & 37.6 & 42.1 \\
GMPO-1.5B~\citep{zhao2025geometric}                  & 20.0 & 53.0 & 77.6 & 30.1 & 38.7 & 43.9 \\
\rowcolor{gray!15}
\textbf{H\"{o}lderPO-1.5B (Ours)}      & 30.0 & 48.1 & 77.0 & 27.9 & 39.1 & \textbf{44.5} \\
\midrule
\multicolumn{7}{c}{\textit{7B Models}} \\
\midrule
\multicolumn{7}{l}{\textit{Base Models}} \\
Qwen2.5-Math-7B~\citep{yang2024qwen25mathtechnicalreportmathematical}          & 16.7 & 38.6 & 50.6 & 9.9  & 16.6 & 26.5 \\
\midrule
\multicolumn{7}{l}{\textit{RL Post-Trained Models}} \\
SimpleRL-Zero-7B~\citep{zeng2025simplerl}         & 26.7 & 60.2 & 78.2 & 27.6 & 40.3 & 46.6 \\
PRIME-Zero-7B~\citep{cui2025process}             & 16.7 & 62.7 & 83.8 & 36.0 & 40.9 & 48.0 \\
OpenReasoner-Zero-7B @ 3k~\citep{hu2025open}& 13.3 & 47.0 & 79.2 & 31.6 & 44.0 & 43.0 \\
OpenReasoner-Zero-7B @ 8k~\citep{hu2025open}& 13.3 & 54.2 & 82.4 & 31.6 & 47.9 & 45.9 \\
Eurus-7B~\citep{yuan2024advancing}                 & 16.7 & 62.7 & 83.8 & 36.0 & 40.9 & 48.0 \\
GPG-7B~\citep{chu2025gpg}                   & 33.3 & 65.0 & 80.0 & 34.2 & 42.4 & 51.0 \\
Oat-Zero-7B~\citep{liu2025understanding}              & 43.3 & 62.7 & 80.0 & 30.1 & 41.0 & 51.4 \\
GRPO ($p=1$)~\citep{shao2024deepseekmath}                         & 40.0 & 59.0 & 83.4 & 32.4 & 41.3 & 51.2 \\
GMPO ($p \to 0$)~\citep{zhao2025geometric}                      & 43.3 & 61.4 & 82.0 & 33.5 & 43.6 & 52.7 \\
PMPO~\citep{zhao2026one}                            & 36.7 & 68.7 & 83.8 & 34.9 & \textbf{46.7} & 54.2 \\
\midrule
\multicolumn{7}{l}{\textit{H\"{o}lderPO (Ours)}} \\
\textbf{H\"{o}lderPO ($p = -2$)}      & 36.7 & 53.0 & 84.6 & 33.5 & 44.7 & 50.5 \\
\textbf{H\"{o}lderPO ($p = -1$)}      & 40.0 & 59.0 & \textbf{85.0} & 33.8 & 42.1 & 52.0 \\
\textbf{H\"{o}lderPO ($p \to 0$)}      & 43.3 & 57.8 & 84.6 & 31.6 & 45.5 & 52.6 \\
\textbf{H\"{o}lderPO ($p = 1$)}      & 40.0 & 57.8 & 83.2 & 30.9 & 44.9 & 51.4 \\
\textbf{H\"{o}lderPO ($p = 2$)}      & 43.3 & 55.4 & 82.0 & 31.2 & 46.5 & 51.7 \\
\textbf{H\"{o}lderPO ($p = 3$)}      & \textbf{46.7} & 61.4 & 81.8 & 32.4 & 40.9 & 52.6 \\
\rowcolor{gray!15}
\textbf{H\"{o}lderPO (Linear Des: $2 \to -2$)}      & 43.3 & \textbf{68.7} & 82.2 & \textbf{34.9} & 45.3 & \textbf{54.9} \\
\midrule
\multicolumn{7}{c}{\textit{R1-Distill-Qwen-7B}} \\
\midrule
\multicolumn{7}{l}{\textit{RL Post-Trained Models}} \\
GRPO ($p=1$)~\citep{shao2024deepseekmath}              & 43.3 & 67.5 & 89.0 & 39.7 & 56.7 & 59.3 \\
Dr.GRPO~\citep{liu2025understanding}                   & 50.0 & 74.7 & 89.6 & 37.5 & 55.7 & 61.5 \\
GMPO ($p \to 0$)~\citep{zhao2025geometric}             & 46.7 & 78.3 & 91.4 & 37.9 & 62.5 & 63.4 \\
PMPO~\citep{zhao2026one}                                & 46.7 & 79.5 & 93.4 & 39.3 & 64.2 & 64.6 \\
\rowcolor{gray!15}
\textbf{H\"{o}lderPO (Linear Des: $2 \to -2$, Ours)} & 53.3 & 79.5 & 92.6 & 42.3 & 64.1 & \textbf{66.4} \\
\bottomrule
\end{tabular}
}
\caption{Comprehensive comparison of H\"{o}lderPO against state-of-the-art baselines across different model scales and base architectures. The static rows report fixed $p$ settings, while the dynamic row employs our linear annealing scheduler, which progressively decays $p$ from an initial value of $2$ to a terminal value of $-2$ over the course of training.}
\label{tab:overall_results}
\vspace{-8mm}
\end{table}

\subsection{Selecting the Schedule Range}
\label{sec:operating-range}

Theorem~\ref{thm:dynamic} establishes the benefit of dynamic scheduling but 
leaves the endpoints $[p_{\text{low}}, p_{\text{high}}]$ open. We select 
this range based on three considerations.

\textbf{Empirical performance is concentrated in a moderate range.} The 
static sweep in Section~\ref{sec:task-sensitivity} shows that the strongest 
configurations across benchmarks fall within $p \in [-2, 2]$, with 
performance degrading smoothly outside this interval.

\textbf{The lower bound is constrained by optimisation stability.} 
Corollary~\ref{variance_optimal} refines Theorem~\ref{thm:variance}: 
under mild gradient-orthogonality, the second moment is minimised at some 
$p^{*} \le 0$ rather than $p \to -\infty$, since weight concentration grows 
exponentially and counteracts the H\"{o}lder-mean decrease. 

\textbf{The optimal range is task-dependent.} We adopt $[2, -2]$ as the 
default for mathematical reasoning, where Qwen2.5-Math's strong 
pre-training tolerates the aggressive upper bound for early-phase signal 
amplification. The endpoints are not universal: for ALFWorld 
(Section~\ref{sec:agentic}), where the base model lacks domain-specific 
pre-training, a more conservative $[1, -1]$ outperforms $[2, -2]$, 
suggesting both endpoints should be calibrated to the base model's 
reasoning maturity and the task's signal-density profile.

\subsection{Generalisation to Agentic Reasoning}
\label{sec:agentic}
To demonstrate that the advantages of H\"{o}lderPO extend beyond pure mathematical domains to broader sequential decision-making scenarios, we evaluate our framework on the ALFWorld benchmark~\citep{shridhar2020alfworld}. ALFWorld is a challenging embodied agentic environment that requires models to complete multi-step, open-ended tasks (e.g., finding, cleaning, or heating objects) based entirely on textual observations and actions. Unlike mathematical proofs, where reasoning is largely self-contained, agentic tasks suffer from compounding errors over long horizons, making stable policy optimisation crucial for success. Following established setups, we employ Qwen2.5-Instruct-1.5B as our base model for this agentic reasoning task. Table~\ref{tab:alfworld_results} presents the success rates across the six distinct sub-task categories in the ALFWorld evaluation suite.

\begin{table}[ht]
\centering
\small 
\begin{tabular}{l|cccccc|c}
\toprule
\textbf{Training Objectives} & \textbf{Pick} & \textbf{Look} & \textbf{Clean} & \textbf{Heat} & \textbf{Cool} & \textbf{Pick Two} & \textbf{Avg.} \\
\midrule
\multicolumn{8}{l}{\textit{Baselines (Base Model: Qwen2.5-Instruct-1.5B)}} \\
\midrule
GRPO ($p=1$)~\citep{shao2024deepseekmath} & 85.3 & 53.7 & 84.5 & 78.2 & 59.7 & 53.5 & 72.8 \\
GMPO ($p \to 0$)~\citep{zhao2025geometric} & 93.1 & 78.6 & 81.0 & 88.2 & 82.1 & \textbf{89.5} & 85.9 \\
GiGPO~\citep{feng2025group} & 94.4 & 67.5 & 94.8 & 94.4 & 79.8 & 76.4 & 86.7 \\
\midrule
\multicolumn{8}{l}{\textit{H\"{o}lderPO (Ours)}} \\
\midrule
\textbf{H\"{o}lderPO (Linear Dec: $2 \to -2$)} & \textbf{97.2} & 85.7 & 87.5 & 91.7 & 79.2 & 81.5 & 87.5 \\
\rowcolor{gray!15}
\textbf{H\"{o}lderPO (Linear Dec: $1 \to -1$)} & 96.9 & \textbf{100.0} & \textbf{100.0} & \textbf{100.0} & \textbf{85.7} & 84.5 & \textbf{93.8} \\
\bottomrule
\end{tabular}
\vspace{2mm}
\caption{Success rates (\%) on the ALFWorld agentic reasoning benchmark. H\"{o}lderPO demonstrates strong generalisation to open-ended, multi-step decision-making tasks.}
\vspace{-2mm}
\label{tab:alfworld_results}
\end{table}

Consistent with our findings in the mathematical domain, H\"{o}lderPO yields substantial performance gains in agentic environments. The dynamic scheduling of $p$ proves particularly well-suited for the compounding challenges of ALFWorld. During the early stages of training, a positive initial $p$ amplifies the sparse, high-magnitude signals associated with rare successful trajectories, effectively exploiting the positive-concentration regime (Theorem~\ref{thm:deformation}). In the later stages, annealing to a negative $p$ tightens the gradient variance bound (Theorem~\ref{thm:variance}), protecting the policy from being derailed by spurious environmental feedback or minor missteps. 

Notably, because our base model (Qwen2.5-Instruct-1.5B) lacks the extensive domain-specific pre-training seen in the mathematical models, it does not initially possess strong, reliable intuitions for embodied environments. Consequently, an overly aggressive initial parameter (e.g., $p=2$) risks over-amplifying early, noisy exploration. Instead, a more conservative schedule (decaying from $1$ to $-1$) proves optimal. By providing a steadier phase of signal amplification before transitioning into variance contraction, this tailored schedule achieves an exceptional average success rate of $93.8\%$, substantially outperforming all baselines. This careful calibration of the concentration--stability trade-off yields a highly robust policy for long-horizon planning.
\vspace{-3mm}
\section{Conclusion}
\label{sec:conclusion}

We introduced \textbf{H\"{o}lder Policy Optimisation (H\"{o}lderPO)}, a 
generalised framework that resolves the concentration--stability trade-off 
inherent in static policy optimisation methods like GRPO. By unifying 
importance-ratio aggregation through the H\"{o}lder mean, the parameter $p$ 
serves as a continuous dial: larger $p$ amplifies sparse high-magnitude 
learning signals, while smaller $p$ tightens the gradient variance bound. 
Built on this principle, our dynamic $p$-annealing scheduler achieves 
state-of-the-art performance across mathematical and agentic benchmarks, 
securing $54.9\%$ on five 
mathematical reasoning benchmarks and $93.8\%$ on ALFWorld.

\paragraph{Limitations.}
Two limitations stand out. First, the schedule introduces hyperparameters 
($p_{\text{high}}$, $p_{\text{low}}$, decay shape) that require empirical 
tuning per task; while linear decay performed best in our setup, we 
provide no theoretical characterisation of the optimal shape. Second, the 
positive-concentration regime amplifies tokens with high importance ratios, 
making H\"{o}lderPO more susceptible to reward hacking when the verifier 
provides false-positive signals.

\paragraph{Future Work.}
A primary direction is an \textit{adaptive} scheduler that adjusts $p$ from 
real-time metrics (e.g., batch-level gradient variance or token-ratio 
dispersion), removing the need for manual tuning.

\bibliographystyle{plainnat}

\bibliography{reference}


\appendix

%
%

\section*{Appendix Outline}
\begin{itemize}[leftmargin=*]
    \item Appendix~\ref{app:extended_related_work}: Extended related work.
    \item Appendix~\ref{app:dynamics}: Training dynamics (entropy and gradient-norm) under different $p$.
    \item Appendix~\ref{app:pseudocode}: Pseudocode for the H\"{o}lderPO loss in log-space.
    \item Appendix~\ref{app:token_clip}: H\"{o}lderPO results under token-level clipping.
    \item Appendix~\ref{sec:schedule-ablation}: Schedule-shape ablation across linear, square, cube, and sinusoidal interpolations.
    \item Appendix~\ref{app:qwen3}: Generalisation to Qwen3-4B-Base and Qwen3-8B-Base.
    \item Appendix~\ref{app:gradient}: Formulas and gradient derivations under three clipping regimes.
    \item Appendix~\ref{app:thm1}: Proofs of Theorem~\ref{thm:amplification} and Theorem~\ref{thm:deformation} (distribution deformation).
    \item Appendix~\ref{app:thm2}: Proof of Theorem~\ref{thm:variance} and Corollary~\ref{variance_optimal} (variance behaviours).
    \item Appendix~\ref{app:thm3}: Proof of Theorem~\ref{thm:dynamic} (advantage of dynamic scheduling).
    \item Appendix~\ref{broader impacts}: Broader Impacts
\end{itemize}
\newpage

\section{Extended Related Work}
\label{app:extended_related_work}

We expand on the broader GRPO ecosystem briefly mentioned in Section~\ref{sec:related_work}. The variants below address aspects of the RL pipeline orthogonal to token-level aggregation.

\textbf{Surrogate Loss and Critic-Free Variants.} 
GPG~\citep{chu2025gpg} simplifies the GRPO objective by removing surrogate losses entirely, while DAPO~\citep{yu2025dapo} introduces dynamic sampling and decoupled clipping bounds. Dr.GRPO~\citep{liu2025understanding} mitigates length bias by removing the per-sequence length normalisation, $\lambda$-GRPO~\citep{wang2025lambda} learns the length preference via a trainable parameter. These methods modify the loss normalisation rather than the aggregation function and are complementary to our framework.

\textbf{Advantage Estimation and Reward Shaping.}
AAPO~\citep{xiong2025aapo} introduces advantage momentum to mitigate zero-gradient situations; BNPO~\citep{xiao2025bnpo} adaptively normalises rewards via a Beta distribution; OPO~\citep{hao2025policy} provides a variance-minimising baseline; and Seed-GRPO~\citep{chen2025seed} scales updates by question difficulty. These contributions modify the advantage signal rather than how token-level ratios are aggregated.

\textbf{Value-Model-Based Variants.} 
To circumvent GRPO's variance issues, some approaches revert to PPO with pre-trained value models, including VC-PPO~\citep{yuan2025s} and T-PPO~\citep{ouyang2025token}. While effective, the external value model introduces confounding factors and computational overhead that the critic-free GRPO family, including ours, deliberately avoids.

\textbf{Data-Centric and Curriculum-Based Approaches.}
Open-Reasoner-Zero~\citep{hu2025open}, PRIME~\citep{cui2025process}, and SimpleRL-Zero~\citep{zeng2025simplerl} democratise scalable RL training through curated training data, curriculum learning, and clean base models. These contributions are at the data and pipeline level, complementary to algorithmic refinements such as ours.

\section{Training Dynamics: Entropy and Gradient-Norm}
\label{app:dynamics} 
\vspace{-5mm}
\begin{figure}[ht]
    \centering
    \includegraphics[width=1\linewidth]{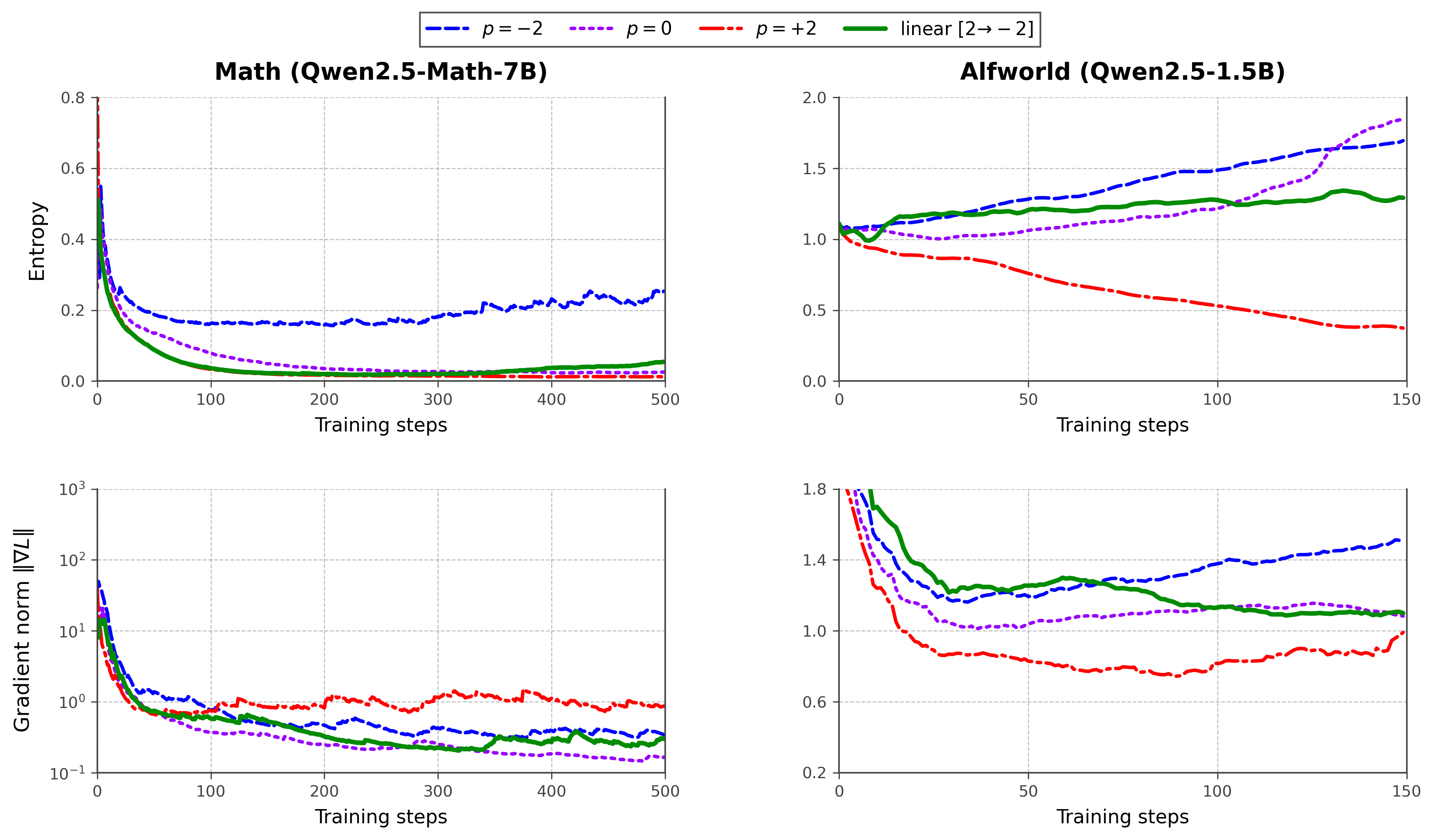}
    \vspace{-.65cm}
    \caption{\textbf{Entropy and gradient-norm dynamics under different H\"{o}lder exponents $p$.}
  \textbf{Columns:} Math (Qwen2.5-Math-7B on MATH-12k) and Alfworld
  (Qwen2.5-1.5B). \textbf{Rows:} per-step policy entropy and gradient
  norm $\|\nabla\mathcal{L}\|$ (log scale on Math, linear on Alfworld).
  Constant-$p$ baselines ($p\!\in\!\{+2,0,-2\}$, dashed/dotted/dash-dotted) are
  compared with our linearly-decaying schedule $p\!:\!2\!\to\!-2$ (solid green).
  Positive $p$ concentrates mass on high-likelihood tokens and pushes entropy
  down; negative $p$ disperses mass and pushes it up. The schedule inherits
  both regimes in sequence and keeps the gradient norm in a tighter band than
  any constant choice.}   
\label{fig:traindynamic}
\end{figure}

\section{Pseudocode}
\label{app:pseudocode} 

H\"{o}lderPO is a single-line modification of the GRPO loss. To preserve numerical stability for large $|p|$, all power operations are evaluated in log-space via the log-sum-exp identity. Algorithm~\ref{alg:holder_po} summarises the full computation. The aggregation operator is applicable at any granularity: in our experiments we use a single sequence-level $\rho_{i,p}$, but token-level or block-level aggregation can be substituted without changing the algorithm or theory.

\begin{algorithm}[ht]
\caption{H\"{o}lderPO Loss Computation}
\label{alg:holder_po}
\begin{algorithmic}[1]
\STATE \textbf{Require:} current policy $\pi_\theta$, reference policy $\pi_{\theta_{\text{old}}}$
\STATE \textbf{Input:} sequence $y$ of length $T$, valid-token mask $M$, advantage $\widehat{A}$, parameter $p$, clip range $\epsilon$
\STATE \textcolor{gray}{\texttt{// Step 1: log-ratio computation}}
\STATE $\Delta_t \gets \log \pi_\theta(y_t \mid x, y_{<t}) - \log \pi_{\theta_{\text{old}}}(y_t \mid x, y_{<t})$
\STATE $|y| \gets \sum_{t=1}^T M_t$
\STATE \textcolor{gray}{\texttt{// Step 2: H\"{o}lder-mean aggregation in log-space}}
\IF{$|p| < 10^{-6}$}
    \STATE $\rho \gets \exp\!\left( \tfrac{1}{|y|} \sum_t M_t\, \Delta_t \right)$ \hfill \textcolor{gray}{\texttt{// limit $p \to 0$ (geometric mean)}}
\ELSE
    \STATE $\rho \gets \left( \tfrac{1}{|y|} \sum_t M_t\, \exp(p\,\Delta_t) \right)^{\!1/p}$
\ENDIF
\STATE \textcolor{gray}{\texttt{// Step 3: PPO-style clipping and loss}}
\STATE $\rho_{\text{clip}} \gets \mathrm{clip}(\rho,\, 1-\epsilon,\, 1+\epsilon)$
\STATE $\mathcal{L}_{\text{unclip}} \gets -\widehat{A}\cdot\rho$
\STATE $\mathcal{L}_{\text{clip}}   \gets -\widehat{A}\cdot\rho_{\text{clip}}$
\STATE $\mathcal{L}_{\text{HPO}}     \gets \max\!\big(\mathcal{L}_{\text{unclip}},\; \mathcal{L}_{\text{clip}}\big)$ \hfill \textcolor{gray}{\texttt{// minimised via SGD}}
\STATE \textbf{return} $\mathcal{L}_{\text{HPO}}$
\end{algorithmic}
\end{algorithm}

\section{Token-Level Clipping of H\"{o}lderPO}
\label{app:token_clip} 
\begin{table}[h]
\centering
\resizebox{\textwidth}{!}{
\begin{tabular}{l|ccccc|c}
\toprule
\textbf{Training Objectives} & \textbf{AIME24} & \textbf{AMC} & \textbf{MATH500} & \textbf{Minerva} & \textbf{Oly.} & \textbf{Avg.} \\
\midrule
\multicolumn{7}{l}{\textit{Token-Level Clip H\"{o}lderPO}} \\
\midrule
\textbf{H\"{o}lderPO ($p = -2$)}      & 36.7 & 61.4 & 81.6 & 35.7 & 43.6 & 51.8 \\
\textbf{H\"{o}lderPO ($p = -1$)}      & 40.0 & 62.7 & 81.6 & 33.5 & 44.5 & 52.5 \\
\textbf{H\"{o}lderPO ($p \to 0$)}     & 43.3 & 61.3 & 81.6 & 34.6 & 42.7 & 52.7 \\
\textbf{H\"{o}lderPO ($p = 1$)}       & 43.3 & 63.9 & 80.8 & 31.6 & 42.8 & 52.5 \\
\textbf{H\"{o}lderPO ($p = 2$)}       & 43.3 & 60.2 & 81.2 & 33.5 & 44.9 & 52.6 \\
\bottomrule
\end{tabular}
}
\caption{H\"{o}lderPO under \textit{token-level} clipping (Eq.~\ref{tokenclippob}) on Qwen2.5-Math-7B across five mathematical benchmarks. Compared with the sequence-level clipping setting reported in Table~\ref{tab:overall_results}, the token-level variant produces a noticeably narrower performance spread across $p$, consistent with our discussion in Section~\ref{sec:variance}: token-level clipping breaks the algebraic structure underlying the variance bound's monotonicity in $p$, weakening the controlled trade-off that motivates dynamic scheduling.}
\label{tab:token_overall_results}
\vspace{-2mm}
\end{table}

\newpage
\section{Schedule-Shape Ablation}
\label{sec:schedule-ablation}
\begin{table}[h]
\centering

\resizebox{\textwidth}{!}{
\begin{tabular}{l|ccccc|c}
\toprule
\textbf{Training Objectives} & \textbf{AIME24} & \textbf{AMC} & \textbf{MATH500} & \textbf{Minerva} & \textbf{Oly.} & \textbf{Avg.} \\
\midrule
\multicolumn{7}{l}{\textit{Dynamic H\"{o}lderPO Variants}} \\
\midrule
\textbf{H\"{o}lderPO (Square Asc: $-2 \to 2$)} & 33.3 & 62.7 & 82.6 & 33.8 & 44.4 & 51.4 \\
\textbf{H\"{o}lderPO (Cube Asc: $-2 \to 2$)}   & 36.7 & 62.7 & 82.8 & 32.0 & 43.1 & 51.4 \\
\textbf{H\"{o}lderPO (Sin Asc: $-2 \to 2$)}    & 43.3 & 59.0 & 81.6 & 35.7 & 45.8 & \textbf{53.1} \\
\textbf{H\"{o}lderPO (Linear Asc: $-2 \to 2$)} & 40.0 & \textbf{66.3} & 81.2 & 32.7 & 42.7 & 52.6 \\
\textbf{H\"{o}lderPO (Square Des: $2 \to -2$)} & 36.7 & 61.4 & 81.6 & 33.1 & 44.0 & 51.3 \\
\textbf{H\"{o}lderPO (Cube Des: $2 \to -2$)}   & 36.7 & 60.2 & 80.8 & 31.6 & \textbf{46.1} & 51.1 \\
\textbf{H\"{o}lderPO (Sin Des: $2 \to -2$)}    & 36.7 & 61.4 & 81.6 & 35.3 & 46.8 & 52.4 \\
\bottomrule
\end{tabular}
}
\caption{Comparison of alternative annealing shapes for the dynamic schedule on Qwen2.5-Math-7B. We sweep four monotonic interpolation families (linear, square, cube, sinusoidal) in both ascending ($-2 \to 2$) and descending ($2 \to -2$) directions, holding the endpoints fixed at $\{-2, +2\}$. Among the seven variants listed here, none surpasses the descending linear schedule of $54.9\%$ reported in Table~\ref{tab:overall_results}, supporting our choice of linear decay as the default.}
\label{tab:ablation_overall_results}
\vspace{-2mm}
\end{table}

\section{Generalisation to Qwen3 Base Models}
\label{app:qwen3}  
To verify that H\"{o}lderPO transfers beyond the Qwen2.5-Math-7B setting, we additionally evaluate on the Qwen3-Base series and compare against three strong token-aggregation baselines (GRPO, GSPO, DAPO) under matched configurations.

\begin{table}[ht]
\centering
\resizebox{\textwidth}{!}{
\begin{tabular}{l|ccccc|c}
\toprule
\textbf{Training Objectives} & \textbf{MATH500} & \textbf{AIME25$^\dagger$} & \textbf{AMC23} & \textbf{Minerva} & \textbf{Oly.} & \textbf{Avg.} \\
\midrule
\multicolumn{7}{l}{\textit{Base Model}} \\
\midrule
Qwen3-4B-Base~\citep{yang2025qwen3} & 58.2 & 7.4  & 45.0 & 14.0 & 28.6 & 30.6 \\
\midrule
\multicolumn{7}{l}{\textit{RL Post-Trained Models}} \\
\midrule
GRPO~\citep{shao2024deepseekmath} & 79.3 & 18.5 & 60.0 & 21.0 & 40.5 & 43.9 \\
GSPO~\citep{zheng2025group}       & 78.5 & 18.5 & 62.5 & 23.2 & 39.8 & 44.5 \\
DAPO~\citep{yu2025dapo}           & 81.3 & 22.2 & \textbf{65.0} & 21.7 & \textbf{41.8} & 46.4 \\
\rowcolor{gray!15}
\textbf{H\"{o}lderPO (Linear Des: $2 \to -2$, Ours)} & \textbf{88.0} & \textbf{30.0} & 60.2 & \textbf{39.0} & 40.6 & \textbf{50.9} \\
\bottomrule
\end{tabular}
}
\caption{Results on \textbf{Qwen3-4B-Base}. We report Pass@1 accuracy (\%) for MATH500, AMC23, Minerva, and Olympiad, and Pass@8 accuracy (\%) for AIME25 ($^\dagger$). H\"{o}lderPO with our default linear annealing schedule ($p\colon 2 \to -2$) achieves $50.9\%$ average accuracy, a $4.5$-point absolute gain over the strongest aggregation baseline DAPO ($46.4\%$) and $7.0$-point gain over GRPO ($43.9\%$).}
\label{tab:qwen3_4b_results}
\vspace{-2mm}
\end{table}

\begin{table}[ht]
\centering
\resizebox{\textwidth}{!}{
\begin{tabular}{l|ccccc|c}
\toprule
\textbf{Training Objectives} & \textbf{MATH500} & \textbf{AIME25$^\dagger$} & \textbf{AMC23} & \textbf{Minerva} & \textbf{Oly.} & \textbf{Avg.} \\
\midrule
\multicolumn{7}{l}{\textit{Base Model}} \\
\midrule
Qwen3-8B-Base~\citep{yang2025qwen3} & 65.0 & 11.1 & 45.0 & 17.3 & 31.1 & 33.9 \\
\midrule
\multicolumn{7}{l}{\textit{RL Post-Trained Models}} \\
\midrule
GRPO~\citep{shao2024deepseekmath} & 80.1 & 22.2 & 67.5 & 27.6 & 42.4 & 48.0 \\
GSPO~\citep{zheng2025group}       & 81.7 & 22.2 & 67.5 & 26.8 & 45.5 & 48.7 \\
DAPO~\citep{yu2025dapo}           & 85.3 & 25.9 & 75.0 & 27.9 & 48.7 & 52.6 \\
\rowcolor{gray!15}
\textbf{H\"{o}lderPO (Linear Des: $2 \to -2$, Ours)} & \textbf{88.4} & \textbf{33.3} & \textbf{75.1} & \textbf{37.5} & \textbf{50.3} & \textbf{56.9} \\
\bottomrule
\end{tabular}
}
\caption{Results on \textbf{Qwen3-8B-Base}. Same evaluation protocol as Table~\ref{tab:qwen3_4b_results}. The advantage of H\"{o}lderPO grows with model scale: at 8B, our method reaches $56.9\%$ average accuracy, a $4.3$-point gain over DAPO ($52.6\%$) and $8.9$-point gain over GRPO ($48.0\%$). Notably, the relative improvement on Minerva ($+9.6$ over DAPO) and Olympiad ($+1.6$) confirms that the gains generalise across both standard and competition-level mathematical reasoning.}
\label{tab:qwen3_8b_results}
\vspace{-2mm}
\end{table}

\section{Formulas and Derivation}
\label{app:gradient}
In this section, we derive the formulas involved in our theory. These formulas can be divided into three parts according to different clipping mechanisms: the original unclipped, token-level clipping, and sequence-level clipping. Finally, we discuss the definition of H\"{o}lder $p$-norm when $p=0$.

We define some notation used throughout this chapter. Let $\mathcal{D}$ denote the dataset of input prompts, from which a query $q$ (or context $x$) is sampled. For each prompt, we sample a group of $G$ responses, denoted as $\{y_i\}_{i=1}^G$, from the reference or old policy $\pi_{\theta_{old}}$. For the $i$-th response $y_i$, let $|y_i|$ represent its total token length, where $y_{i,t}$ is the $t$-th token and $y_{i,<t}$ denotes the prefix. The current policy parameterised by $\theta$ is denoted as $\pi_\theta$.  Finally, $\widehat{A}_i$ represents the estimated advantage for the $i$-th response, defined as 
\[
\widehat{A}_i=\frac{r\left(x, y_i\right)-\operatorname{mean}\left(\left\{r\left(x, y_i\right)\right\}_{i=1}^G\right)}{\operatorname{std}\left(\left\{r\left(x, y_i\right)\right\}_{i=1}^G\right)},
\]
where $r(x, y_i)$ denotes the absolute reward score assigned to the $i$-th generated response $y_i$ conditioned on the input $x$. Here  $\operatorname{mean}(\cdot)$ and $\operatorname{std}(\cdot)$ represent the arithmetic mean and standard deviation. For simplicity, in this section we omit the KL regularisation term from all PPO-style objective function formulas.

\subsection{No Clipping Formulas}\label{uncliformulas}
As we know, the simplest unclipped GRPO objective function formula is 
\[
\mathbb{E}_{q \sim \mathcal{D}, \{y_i\}_{i=1}^G \sim \pi_{\theta_{old}}} \left[ \frac{1}{G} \sum_{i=1}^G \left( \frac{1}{|y_i|} \sum_{t=1}^{|y_i|} \frac{\pi_\theta(y_{i,t} | x, y_{i,<t})}{\pi_{\theta_{old}}(y_{i,t} | x, y_{i,<t})} \right) \widehat{A}_i \right],
\]
which can be regarded as a special case of the objective function given by \cite{schulman2015trust} with the surrogate term $\frac{1}{|y_i|} \sum_{t=1}^{|y_i|} \frac{\pi_\theta(y_{i,t} | x, y_{i,<t})}{\pi_{\theta_{old}}(y_{i,t} | x, y_{i,<t})}$. Obviously it is the arithmetic mean value of importance sampling ratios $r_{i,t}=\frac{\pi_\theta(y_{i,t} | x, y_{i,<t})}{\pi_{\theta_{old}}(y_{i,t} | x, y_{i,<t})}$ with respect to tokens in the sequence $y_i$. By H\"{o}lder-$p$ norm, we extend the arithmetic mean value of ratios to $\rho_{i,p}(\theta)$, which is defined as 
\begin{equation}\tag{\ref{eq:rho-def}}
\rho_{i,p}(\theta) \;=\; \left( \frac{1}{|y_i|} \sum_{t=1}^{|y_i|} r_{i,t}(\theta)^p \right)^{\!1/p}, \qquad p \in \mathbb{R}\setminus\{0\}. 
\end{equation}
Later we discuss the case for $p=0$ in \ref{F4}. The unclipped objective function of H\"{o}lder-MPO is
\begin{equation}\label{unclip}
\mathcal{J}_{\mathrm{H}}(\theta)=\mathbb{E}_{x \sim \mathcal{D},\left\{y_i\right\}_{i=1}^G \sim \pi_{\theta_{\mathrm{old}}}(\cdot \mid x)}\left[\frac{1}{G} \sum_{i=1}^{\mathrm{G}} \rho_{i, p}(\theta) \widehat{A}_i\right].
\end{equation}
To calculate the policy gradient of this objective function, we first prove
\begin{equation}\label{nablerho}
\nabla_\theta \rho_{i, p}(\theta)=\frac{\rho_{i, p}(\theta)^{1-p}}{\left|y_i\right|} \sum_{t=1}^{\left|y_i\right|} r_{i, t}(\theta)^p \nabla_\theta \log \pi_\theta\left(y_{i, t} \mid x, y_{i,<t}\right).
\end{equation}
Starting from the definition of the H\"{o}lder mean in~\eqref{eq:rho-def},
\begin{equation*}
    \log \rho_{i,p}(\theta) \;=\; \frac{1}{p}\log\!\left( \frac{1}{|y_i|}\sum_{t=1}^{|y_i|} r_{i,t}(\theta)^p \right).
\end{equation*}
Differentiating both sides with respect to $\theta$:
\begin{equation*}
    \frac{\nabla_\theta \rho_{i,p}(\theta)}{\rho_{i,p}(\theta)}
    \;=\; \frac{1}{p}\, \cdot\, \frac{\nabla_\theta \sum_t r_{i,t}^p(\theta)}{\sum_t r_{i,t}^p(\theta)}.
\end{equation*}
Since $\pi_{{{old}}}$ does not depend on $\theta$, the chain rule gives
\begin{equation*}
    \nabla_\theta r_{i,t}(\theta)
    \;=\; \frac{\nabla_\theta \pi_\theta(y_{i,t}\!\mid\!x, y_{i,<t})}{\pi_{\theta_{\text{old}}}(y_{i,t}\!\mid\!x, y_{i,<t})}
    \;=\; r_{i,t}(\theta)\cdot \nabla_\theta \log \pi_\theta(y_{i,t}\!\mid\!x, y_{i,<t}),
\end{equation*}
and consequently $\nabla_\theta r_{i,t}^p(\theta) = p\, r_{i,t}^p(\theta) \, \nabla_\theta \log \pi_\theta(y_{i,t}\!\mid\!\cdot)$.
Substituting and cancelling the factor of $p$:
\begin{equation*}
    \frac{\nabla_\theta \rho_{i,p}(\theta)}{\rho_{i,p}(\theta)}
    \;=\; \frac{\sum_t r_{i,t}^p(\theta)\, \nabla_\theta \log \pi_\theta(y_{i,t}\!\mid\!\cdot)}{\sum_t r_{i,t}^p(\theta)}
    \;=\; \sum_t W_{i,t}(p)\, \nabla_\theta \log \pi_\theta(y_{i,t}\!\mid\!\cdot),
\end{equation*}
where $W_{i,t}(p)$ is defined in~\eqref{eq:rho-grad}. Multiplying through by $\rho_{i,p}(\theta)$ recovers Eq.~\eqref{eq:rho-grad}.

Then the policy gradient of the unclipped H\"{o}lder-MPO is 
\begin{equation}\label{pgunclip}
\nabla_\theta \mathcal{J}_{\mathrm{H}}(\theta)= \mathbb{E}_{x \sim \mathcal{D},\left\{y_i\right\}_{i=1}^G \sim \pi_{\theta_{\mathrm{old}}}(\cdot \mid x)} \left[\frac{1}{G} \sum_{i=1}^{\mathrm{G}} \nabla_\theta \rho_{i, p}(\theta) \widehat{A}_i\right],
\end{equation}
whose unbiased mini-batch estimator is denoted by $\widehat{\nabla}_\theta \mathcal{J}_{\mathrm{H}}(\theta)$ 
\begin{equation*}
\widehat{\nabla}_\theta \mathcal{J}_{\mathrm{H}}(\theta) = \frac{1}{B} \sum_{b=1}^{B} \left[ \frac{1}{G} \sum_{i=1}^{\mathrm{G}} \nabla_\theta \rho_{i, p}(\theta) \widehat{A}_i \right].     
\end{equation*}
where $B$ denotes the batch size. By Eq.~\eqref{nablerho}, we know
\begin{equation}\label{unclibe}
\widehat{\nabla}_\theta \mathcal{J}_{\mathrm{H}}(\theta)= \frac{1}{B} \sum_{b=1}^{B} \frac{1}{G} \sum_{i=1}^{\mathrm{G}}  \left(\widehat{A}_i \frac{\rho_{i, p}(\theta)^{1-p}}{|y_i|}\right) \sum_{t=1}^{|y_i|} r_{i, t}(\theta)^p \nabla_\theta \log \pi_\theta(y_{i, t} \mid x, y_{i,<t}).    
\end{equation}

\subsection{Token-Level Clipping Formulas}\label{tlclipping}
There are many PPO extensions adopting token-level clipping mechanisms to ensure training stability and prevent policy collapse. For instance, Group Relative Policy Optimisation (GRPO), Geometric-Mean Policy Optimisation (GMPO) \citep{zhao2025geometric} and Dynamic Sampling Policy Optimisation (DAPO) \citep{yu2025dapo}. With token-level clipping, the objective function (\ref{unclip}) of our H\"{o}lder-MPO becomes
\begin{align}\label{tokenclippob}
\mathcal{J}_{\mathrm{H}_t}(\theta) &=\mathbb{E}_{x \sim \mathcal{D},\left\{y_i\right\}_{i=1}^G \sim \pi_{\theta_{\mathrm{old}}}(\cdot \mid x)}\left[\frac{1}{G}\left(\sum_{\widehat{A}_i>0} C_{i,p} \widehat{A}_i+\sum_{\widehat{A}_i<0} D_{i,p} \widehat{A}_i \right)\right], \\
C_{i,p} & =\left(\frac{1}{\left|y_i\right|} \sum_{t=1}^{\left|y_i\right|} \min \left(r_{i, t}(\theta), \operatorname{clip}\left(r_{i, t}(\theta), 1-\varepsilon, 1+\varepsilon\right)\right)^p\right)^{1 / p}, \nonumber \\
D_{i,p} & =\left(\frac{1}{\left|y_i\right|} \sum_{t=1}^{\left|y_i\right|} \max \left(r_{i, t}(\theta), \operatorname{clip}\left(r_{i, t}(\theta), 1-\varepsilon, 1+\varepsilon\right)\right)^p\right)^{1 / p}, \nonumber   
\end{align}
where the clipping function is defined by 
\begin{equation}
\operatorname{clip}(x, 1-\epsilon, 1+\epsilon):= 
\begin{cases} 
1-\epsilon, & \text{if } x < 1-\epsilon \\
x, & \text{if } 1-\epsilon \le x \le 1+\epsilon \\
1+\epsilon, & \text{if } x > 1+\epsilon.
\end{cases}
\end{equation}
To deduce this formula, we firstly recall the token-level clipping GRPO objective function in \citep{shao2024deepseekmath}
\begin{equation*}
\mathcal{J}_{\mathrm{GRPO}}(\theta)=\mathbb{E}_{x \sim \mathcal{D},\left\{y_i\right\}_{i=1}^G \sim \pi_{\theta_{\mathrm{old}}}(\cdot \mid x)} \left[ \frac{1}{G} \sum_{i=1}^G  \frac{1}{|y_i|} \sum_{t=1}^{|y_i|} \min( r_{i,t}\widehat{A}_{i,t},\clip(r_{i,t}, 1-\epsilon, 1+\epsilon)\widehat{A}_{i,t})\right],  
\end{equation*}
where $\widehat{A}_{i,t}=\widehat{A}_i$ is the estimator of sequence-level reward. According to the sign of $\widehat{A}_{i}$, the content inside the expectation of GRPO objective function should equal to  
\[
\frac{1}{G} \left( \left(\sum_{\widehat{A}_{i}>0}+ \sum_{\widehat{A}_{i}<0} \right)  \frac{1}{|y_i|} \sum_{t=1}^{|y_i|} \min( r_{i,t}\widehat{A}_{i},\clip(r_{i,t}, 1-\epsilon, 1+\epsilon)\widehat{A}_{i}) \right).
\]
For $\widehat{A}_{i}>0$, it is obvious that 
\[
\min( r_{i,t}\widehat{A}_{i},\clip(r_{i,t}, 1-\epsilon, 1+\epsilon)\widehat{A}_{i})=\min( r_{i,t},\clip(r_{i,t}, 1-\epsilon, 1+\epsilon))\widehat{A}_{i}.
\]
For $\widehat{A}_{i}<0$, it is obvious that
\[
\min( r_{i,t}\widehat{A}_{i},\clip(r_{i,t}, 1-\epsilon, 1+\epsilon)\widehat{A}_{i})=\max( r_{i,t},\clip(r_{i,t}, 1-\epsilon, 1+\epsilon))\widehat{A}_{i}.
\]
Therefore, the positive and negative part of the content inside the expectation of GRPO objective function should be expressed as
\[
\frac{1}{G} \sum_{\widehat{A}_{i}>0}  \left(\frac{1}{|y_i|} \sum_{t=1}^{|y_i|} \min( r_{i,t},\clip(r_{i,t}, 1-\epsilon, 1+\epsilon))\right) \widehat{A}_{i}, 
\]
\[
\quad \frac{1}{G} \sum_{\widehat{A}_{i}<0}  \left(\frac{1}{|y_i|} \sum_{t=1}^{|y_i|} \max( r_{i,t},\clip(r_{i,t}, 1-\epsilon, 1+\epsilon))\right) \widehat{A}_{i},
\]
which are special cases of $C_{i,p}$ and $D_{i,p}$ when $p=1$. Later in \ref{F4} we will show the objective function of GMPO is a special case of (\ref{tokenclippob}) when $p=0$.

Next we deduce the policy gradient formula of token-level clipping objective function. It is obvious that
\[
\nabla_\theta\mathcal{J}_{\mathrm{H}_t}(\theta) =\mathbb{E}_{x \sim \mathcal{D},\left\{y_i\right\}_{i=1}^G \sim \pi_{\theta_{\mathrm{old}}}(\cdot \mid x)}\left[\frac{1}{G}\left(\sum_{\widehat{A}_i>0} \nabla_\theta C_{i,p} \widehat{A}_i+\sum_{\widehat{A}_i<0} \nabla_\theta D_{i,p} \widehat{A}_i \right)\right].
\]
The derivatives of $C_{i,p}$ and $D_{i,p}$ depend on the value taken by the clipping
function. When $r_{i,t}\le 1+\epsilon$, the smaller one of $r_{i,t}$ and $\clip(r_{i,t}, 1-\epsilon, 1+\epsilon)$ is $r_{i,t}$, whereas the smaller value becomes $1+\epsilon$ when $r_{i,t}> 1+\epsilon$. In the former case, the contribution of token $t$ to $\nabla_\theta C_{i,p}$ is 
\[
\frac{C_{i,p}(\theta)^{1-p}}{\left|y_i\right|}  r_{i, t}(\theta)^p \cdot \nabla_\theta \log \pi_\theta\left(y_{i,t} \mid x, y_{i,<t}\right).
\]
In the latter case, this token's partial derivative contribution to $\nabla_\theta C_{i,p}$ is zero. Similarly, when $r_{i,t} \ge 1-\epsilon$, the larger one of $r_{i,t}$ and $\clip(r_{i,t}, 1-\epsilon, 1+\epsilon)$ is $r_{i,t}$, whereas the larger value becomes $1-\epsilon$ when $r_{i,t}< 1-\epsilon$. In the former case, the contribution of token $t$ to $\nabla_\theta D_{i,p}$ is 
\[
\frac{D_{i,p}(\theta)^{1-p}}{\left|y_i\right|} r_{i, t}(\theta)^p \cdot \nabla_\theta \log \pi_\theta\left(y_{i, t} \mid x, y_{i,<t}\right).
\]
In the latter case, this token's partial derivative contribution is zero. We summarise all the cases in the following formula
\begin{align}
\nabla_\theta \mathcal{J}_{\mathrm{H}_t}(\theta)&=\mathbb{E}_{x,\left\{y_i\right\}}\left[\frac{1}{G} \sum_{i=1}^G \widehat{A}_i \cdot \frac{H_{i,p}(\theta)^{1-p}}{\left|y_i\right|} \sum_{t=1}^{\left|y_i\right|} \mathbb{I}_{i, t}(\theta) \cdot r_{i, t}(\theta)^p \cdot \nabla_\theta \log \pi_\theta\left(y_{i,t} \mid x, y_{i,<t}\right)\right], \label{tlclippg} \\
H_{i,p}(\theta)&= \begin{cases}C_{i,p}, & \text { if } \widehat{A}_i \geq 0 \\ D_{i,p}, & \text { if } \widehat{A}_i<0, \end{cases} \quad \mathbb{I}_{i, t}(\theta)= \begin{cases}0, & \text { if } \widehat{A}_i>0 \text { and } r_{i, t}(\theta)>1+\epsilon, \text { or, if } \widehat{A}_i<0 \text { and } r_{i, t}(\theta)<1-\epsilon \\ 1, & \text { otherwise. } \nonumber\end{cases}
\end{align}
The unbiased mini-batch estimator is 
\begin{equation}\label{tles}
\widehat{\nabla}_\theta \mathcal{J}_{\mathrm{H}_t}(\theta)= \frac{1}{B} \sum_{b=1}^{B} \frac{1}{G} \sum_{i=1}^G \widehat{A}_i \cdot \frac{H_{i,p}(\theta)^{1-p}}{\left|y_i\right|} \sum_{t=1}^{\left|y_i\right|} \mathbb{I}_{i, t}(\theta) \cdot r_{i, t}(\theta)^p \cdot \nabla_\theta \log \pi_\theta\left(y_{i,t} \mid x, y_{i,<t}\right).
\end{equation}
\subsection{Sequence-Level Clipping Formulas}\label{slclipping}
Notably, alongside the widespread adoption of token-level clipping, several recent studies have shifted towards sequence-level clipping strategies, such as Group Sequence Policy Optimisation (GSPO) \citep{zheng2025group}, whose objective function is 
\[
\mathcal{J}_{\mathrm{GSPO}}(\theta)=\mathbb{E}_{x \sim \mathcal{D},\left\{y_i\right\}_{i=1}^G \sim \pi_{\theta_{\mathrm{old}}}(\cdot \mid x)}\left[\frac{1}{G} \sum_{i=1}^G \min \left(s_i(\theta) \widehat{A}_i, \operatorname{clip}\left(s_i(\theta), 1-\varepsilon, 1+\varepsilon\right) \widehat{A}_i\right)\right],
\]
where $s_i(\theta)=\left(\prod_{t=1}^{|y_i|} r_{i,t}\right)^{\frac{1}{|y_i|}}=\exp \left(\frac{1}{\left|y_i\right|} \sum_{t=1}^{\left|y_i\right|} \log \frac{\pi_\theta\left(y_{i, t} \mid x, y_{i,<t}\right)}{\pi_{\theta_{\text {old }}}\left(y_{i, t} \mid x, y_{i,<t}\right)}\right)$ is the geometric mean of ratio of each token. Actually it is a special case of our H\"{o}lder-MPO objective function with sequence-level clipping 
\begin{equation}
\mathcal{J}_{\mathrm{H}_s}(\theta)=\mathbb{E}_{x \sim \mathcal{D},\left\{y_i\right\}_{i=1}^G \sim \pi_{\theta_{\mathrm{old}}}(\cdot \mid x)}\left\{\frac{1}{G} \sum_{i=1}^G \min \left[\rho_{i, p}(\theta) \widehat{A}_i, \operatorname{clip}\left(\rho_{i, p}(\theta), 1-\epsilon, 1+\epsilon\right) \widehat{A}_i\right]\right\}.   
\end{equation}
In Lemma~\ref{p=0}, we will show $s_i(\theta)$ in GSPO is equal to $\rho_{i,0}(\theta)$. By a similar discussion for $\rho_{i,p}$, we can obtain the policy gradient 
\begin{align}
\nabla_\theta \mathcal{J}_{\mathrm{H}_s}(\theta)&=\mathbb{E}_{x,\left\{y_i\right\}}\left[\frac{1}{G} \sum_{i=1}^G \mathbb{I}_i(\theta) \cdot \widehat{A}_i \cdot \frac{\rho_{i, p}(\theta)^{1-p}}{\left|y_i\right|} \sum_{t=1}^{\left|y_i\right|} r_{i, t}(\theta)^p \nabla_\theta \log \pi_\theta\left(y_{i, t} \mid x, y_{i,<t}\right)\right], \label{slclippg} \\
\mathbb{I}_i(\theta)&= \begin{cases}0, & \text { if } \widehat{A}_i>0 \text { and } \rho_{i, p}(\theta)>1+\epsilon, \text { or, if } \widehat{A}_i<0 \text { and } \rho_{i, p}(\theta)<1-\epsilon \\ 1, & \text { otherwise. }\end{cases} \nonumber
\end{align}
The unbiased mini-batch estimator is
\begin{equation}\label{seqlevelest}
\widehat{\nabla}_\theta \mathcal{J}_{\mathrm{H}_s}(\theta)=\frac{1}{B} \sum_{b=1}^{B}\frac{1}{G} \sum_{i=1}^G \mathbb{I}_i(\theta) \cdot \widehat{A}_i \cdot \frac{\rho_{i, p}(\theta)^{1-p}}{\left|y_i\right|} \sum_{t=1}^{\left|y_i\right|} r_{i, t}(\theta)^p \nabla_\theta \log \pi_\theta\left(y_{i, t} \mid x, y_{i,<t}\right).
\end{equation}

\subsection{$p=0$ Formulas}\label{F4}
In this section, we extend the three kinds of formulas to $p=0$. By functional analysis, the mean value given by H\"{o}lder $p$-norm for a sequence of positive real numbers $x_1,\dots,x_n$ is 
\[
M_p=\left(\tfrac{1}{n}\right)^{\frac{1}{p}}(x_1^p+\dots+x_n^p)^{\frac{1}{p}}.
\]
The following lemma shows that when $p \rightarrow 0$, the limit of the mean value given by H\"{o}lder $p$-norm is the geometric mean value.
\begin{lemma}\label{p=0}
For any sequence of positive real numbers $x_1, \dots, x_n$, the H\"{o}lder mean $M_p$ converges to the geometric mean as $p$ approaches $0$.
\end{lemma}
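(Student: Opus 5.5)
The plan is to pass to logarithms and reduce the claim to a one-variable derivative at $p=0$. Fix positive reals $x_1,\dots,x_n$ and, for $p\neq 0$, write
\[
\log M_p \;=\; \frac{\log f(p)}{p}, \qquad f(p) \coloneqq \frac{1}{n}\sum_{k=1}^n x_k^p = \frac{1}{n}\sum_{k=1}^n e^{p\log x_k}.
\]
The function $f$ is a finite sum of exponentials in $p$, so it is smooth on all of $\mathbb{R}$. It is strictly positive, and $f(0)=1$. Hence $g(p)\coloneqq \log f(p)$ is smooth near $0$ with $g(0)=0$.

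The key step is to recognise $\log M_p = \bigl(g(p)-g(0)\bigr)/p$ as a difference quotient. As $p\to 0$ (from either side) it therefore converges to $g'(0)$. Alternatively, one can invoke L'H\^opital's rule on the $0/0$ form, but the difference-quotient view avoids any extra hypotheses.

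To compute the limit, differentiate: $g'(p)=f'(p)/f(p)$ with $f'(p)=\frac1n\sum_k x_k^p\log x_k$. At $p=0$ this gives
\[
g'(0)=\frac{1}{n}\sum_{k=1}^n \log x_k .
\]
Since $\exp$ is continuous,
\[
M_p=\exp(\log M_p)\;\longrightarrow\;\exp\Bigl(\tfrac1n\sum_k\log x_k\Bigr)=\Bigl(\prod_k x_k\Bigr)^{1/n}
\]
as $p\to0$, which is the geometric mean.

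There is no substantive obstacle in this argument. The only care needed is to note that positivity of the $x_k$ is what makes $x_k^p=e^{p\log x_k}$ well defined and smooth for all real $p$, including negative $p$, and to treat both one-sided limits. The difference-quotient argument handles $p\to0^+$ and $p\to0^-$ simultaneously. As a consequence, $\rho_{i,0}$ in \eqref{eq:rho-def} is the continuous extension of $\rho_{i,p}$ and coincides with GSPO's $s_i(\theta)$.
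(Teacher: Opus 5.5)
Your proposal is correct and follows the paper's proof: both write $\ln M_p$ as the difference quotient $\bigl(g(p)-g(0)\bigr)/p$ of $g(p)=\ln\bigl(\frac{1}{n}\sum_k x_k^p\bigr)$ with $g(0)=0$, compute $g'(0)=\frac{1}{n}\sum_k \ln x_k$, and exponentiate. The only difference is naming: the paper calls this logarithmic function $f$, whereas you reserve $f$ for the inner average.
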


\begin{proof}
To evaluate the limit of $M_p$ as $p \to 0$, we first take the natural logarithm of $M_p$:
$$\ln(M_p) = \frac{\ln \left( \frac{1}{n} \sum_{i=1}^n x_i^p \right)}{p}.$$
Let us define an auxiliary function $f(p) = \ln \left( \frac{1}{n} \sum_{i=1}^n x_i^p \right)$. Notice that at $p=0$, $f(0) = \ln \left( \frac{1}{n} \sum_{i=1}^n 1 \right) = \ln(1) = 0$. 

Therefore, the limit as $p \to 0$ is precisely the definition of the derivative of $f(p)$ evaluated at $p=0$:
$$\lim_{p \to 0} \ln(M_p) = \lim_{p \to 0} \frac{f(p) - f(0)}{p - 0} = f'(0).$$

We can explicitly compute the derivative $f'(p)$ using the chain rule:
$$f'(p) = \frac{1}{\frac{1}{n} \sum_{i=1}^n x_i^p} \cdot \left( \frac{1}{n} \sum_{i=1}^n x_i^p \ln(x_i) \right).$$

Evaluating this derivative at $p=0$ gives:
$$f'(0) = \frac{1}{\frac{1}{n} (n)} \cdot \left( \frac{1}{n} \sum_{i=1}^n 1 \cdot \ln(x_i) \right) = \frac{1}{n} \sum_{i=1}^n \ln(x_i).$$

Finally, exponentiating both sides recovers the limit for the original expression $M_p$:
$$\lim_{p \to 0} M_p = \lim_{p \to 0} e^{\ln(M_p)} = e^{f'(0)} = e^{\frac{1}{n} \sum_{i=1}^n \ln(x_i)} = \left( \prod_{i=1}^n x_i \right)^{\frac{1}{n}}.$$
This recovers exactly the geometric mean of the sequence, completing the proof.
\end{proof}
Naturally, we can define all of our objective functions by the geometric mean value for $p=0$. Hence we can see the GSPO (resp. GMPO) objective function is the $p=0$ special case of our sequence-level (resp. token-level) clipping objective function. 

For the policy gradient calculations, we need to discuss the commutativity of operators $\lim_{p\rightarrow0}$ and $\nabla_\theta$. We first define the concept of class $C^1$ multi-variable function $f(p,\theta)$.
\begin{definition}[Class $C^1$]
Let $U \subseteq \mathbb{R} \times \mathbb{R}^d$ be an open set. A function $f: U \to \mathbb{R}$ is said to be jointly continuously differentiable, or of class $C^1$ on $U$ (denoted as $f \in C^1(U)$), if it satisfies that all first-order partial derivatives of $f$, namely $\frac{\partial f}{\partial p}$ and the gradient vector $\nabla_\theta f$, exist at every point $(p, \theta) \in U$ and are jointly continuous on $U$. 
\end{definition}
The next theorem, whose study object is $C^1$-function, can be utilised to guarantee the commutativity of the two operators in the no-clipping case.
\begin{theorem}\label{thm:c1_extension}
Let $f(p, \theta)$ be a parameterised function defined on $(I \setminus \{0\}) \times U$ ($0 \in I$), where $p \in I \subset \mathbb{R}$ and $\theta \in U \subset \mathbb{R}^d$. Suppose the singularity at $p=0$ is removable, such that the extended function defined as
\begin{equation*}
\tilde{f}(p, \theta) = 
\begin{cases} 
f(p, \theta), & \text{if } p \neq 0, \\
\lim_{p \to 0} f(p, \theta), & \text{if } p = 0,
\end{cases}
\end{equation*}
is of class $C^1$ on the joint neighbourhood $I \times U$. Then the differential operator commutes with the limit operator as $p \to 0$:
\begin{equation*}
\lim_{p \to 0} \nabla_\theta \tilde{f}(p, \theta) = \nabla_\theta \tilde{f}(0, \theta) = \nabla_\theta \left( \lim_{p \to 0} f(p, \theta) \right). 
\end{equation*}
\end{theorem}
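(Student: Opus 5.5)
The plan is to prove continuity of the gradient in $p$ at $\theta$ fixed, and then identify its value at $p=0$ using the $C^1$ hypothesis. Fix $\theta_0\in U$.

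\textbf{Step 1.} Since $\tilde f\in C^1(I\times U)$, every component $\partial_{\theta_j}\tilde f$ is jointly continuous on $I\times U$. In particular, the map $p\mapsto \nabla_\theta\tilde f(p,\theta_0)$ is continuous at $p=0$. Hence
\[
\lim_{p\to0}\nabla_\theta\tilde f(p,\theta_0)=\nabla_\theta\tilde f(0,\theta_0).
\]
This gives the first equality.

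\textbf{Step 2.} For the second equality, note that $\tilde f(0,\cdot)=\lim_{p\to0}f(p,\cdot)$ by definition. So $\nabla_\theta\tilde f(0,\theta_0)$ is exactly the gradient of the function $\theta\mapsto\lim_{p\to0}f(p,\theta)$. Here we must be careful that "$\nabla_\theta$" in the $C^1$ definition means the partial gradient of the extended function on the slice $p=0$. That partial gradient is precisely the gradient of $\tilde f(0,\cdot)$, so the two sides agree.

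\textbf{Step 3 (optional strengthening).} To make the interchange explicit without appealing to the definition of the slice, I would use a mean-value argument. For $p\neq0$, write
\[
\tilde f(p,\theta_0+he_j)-\tilde f(p,\theta_0)=h\,\partial_{\theta_j}\tilde f(p,\theta_0+\xi e_j)
\]
with $\xi$ between $0$ and $h$. Then let $p\to0$, using the continuity of $\tilde f$ on the left. Next divide by $h$ and let $h\to0$, using the joint continuity of $\partial_{\theta_j}\tilde f$ on a compact neighbourhood. Uniform continuity there lets the two limits be swapped.

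\textbf{Main obstacle.} The only delicate point is that $f$ itself is undefined at $p=0$. The hypothesis that $\tilde f$ is $C^1$ on the full neighbourhood $I\times U$, including $p=0$, is what licenses the argument. Given that hypothesis the proof is essentially immediate. The real work in applications, such as taking $f=\rho_{i,p}$, is verifying that hypothesis. For $\rho_{i,p}$ I would do this by writing $\log\rho_{i,p}=\tfrac1p\log\bigl(\tfrac1n\sum_t e^{p\ell_t}\bigr)$ with $\ell_t=\log r_{i,t}(\theta)$. This expression is real-analytic in $(p,\ell)$ after removing the singularity, as in Lemma~\ref{p=0}.
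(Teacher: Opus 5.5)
Your proposal is correct and follows the paper's argument: the paper likewise uses the joint continuity of $\nabla_\theta\tilde f$ on $I\times U$ (citing Rudin, Thm.~9.21) to get continuity of $p\mapsto\nabla_\theta\tilde f(p,\theta)$ at $p=0$, and the second equality holds by the definition $\tilde f(0,\cdot)=\lim_{p\to0}f(p,\cdot)$. Your Step~3 is not needed under the stated $C^1$ hypothesis, since the gradient of the slice $\tilde f(0,\cdot)$ is by definition $\nabla_\theta\tilde f(0,\theta)$; that mean-value argument would only do real work under the weaker assumption that $\tilde f$ is continuous and $\nabla_\theta f$ merely extends continuously to $p=0$.
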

\begin{proof}
By hypothesis, the extended objective function $\tilde{f}(p, \theta)$ is of class $C^1$ on the joint domain $I \times U$. According to Thm. 9.21 in \cite{rudin1976principles}, the partial derivative operator with respect to $\theta$, denoted as $\nabla_\theta \tilde{f}(p, \theta)$, forms a continuous mapping from $I \times U$ to $\mathbb{R}^d$. Then for any fixed parameter $\theta \in U$, the mapping $p \mapsto \nabla_\theta \tilde{f}(p, \theta)$ is continuous at $p=0$. Thus we obtain the result.   
\end{proof}
For the no-clipping case (\ref{unclip}), the function inside 
the expectation is $L(p, \theta) = \frac{1}{G} \sum_{i=1}^{G} \rho_{i, p}(\theta) \widehat{A}_i$. Because the group size $G$ and the advantage estimates $\widehat{A}_i$ are scalars, the function $L(p, \theta)$ is of class $C^1$ if and only if ${\rho}_{i, p}(\theta)$ is of class $C^1$. This holds true based on the two properties below. 

Firstly, following standard assumptions in deep reinforcement learning, the neural network $\pi_\theta$ utilising smooth activation functions (e.g., Swish, GeLU) and linear transformations is continuously differentiable ($C^1$) with respect to its weights $\theta$. By the chain rule, the strictly positive composite function $r_{i,t}(\theta)$ identically inherits this $C^1$ property. For widely adopted Lipschitz continuous activation functions that are not strictly $C^1$ globally (e.g., ReLU), Rademacher's theorem guarantees that they are differentiable almost everywhere. In the context of stochastic optimisation over continuous parameter spaces, the set of points where the derivative is undefined has Lebesgue measure zero. Consequently, they admit generalised gradients (e.g., Clarke subdifferentials) and are conventionally treated within the $C^1$ framework without loss of theoretical generality.

Secondly, for any $p \neq 0$, $\rho_{i, p}(\theta)$ is a composition of smooth elementary functions and is inherently $C^1$. At the singularity $p=0$, we evaluate the extended function through its logarithmic form:
    \begin{equation}
        \ln \rho_{i, p}(\theta) = \frac{1}{p} \ln \left( \frac{1}{|y_i|} \sum_{t=1}^{|y_i|} e^{p \ln r_{i, t}(\theta)} \right).
    \end{equation}
    By expanding the inner exponential term via its Taylor series around $p=0$, we obtain $\frac{1}{|y_i|} \sum (1 + p \ln r_{i,t} + \mathcal{O}(p^2)) = 1 + p (\frac{1}{|y_i|} \sum \ln r_{i,t}) + \mathcal{O}(p^2)$. Applying the first-order Taylor expansion to the outer logarithm $\ln(1+z) \approx z$ yields:
    \begin{equation}
        \ln \rho_{i, p}(\theta) = \frac{1}{p} \left[ p \left(\frac{1}{|y_i|} \sum_{t=1}^{|y_i|} \ln r_{i,t}(\theta)\right) + \mathcal{O}(p^2) \right].
    \end{equation}
    The parameter $p$ in the denominator perfectly cancels the leading $p$ in the numerator. Because the singularity is analytically removed through this cancellation, the extended function $\rho_{i, p}(\theta)$ and its partial derivatives ($\frac{\partial}{\partial p}$ and $\nabla_\theta$) exhibit no discontinuities or undefined behaviour at $p=0$. 

Therefore, the inner objective function $L(p, \theta)$ is mathematically guaranteed to be jointly $C^1$ on the neighbourhood encompassing $p=0$. This fulfils the prerequisites of Theorem~\ref{thm:c1_extension}, justifying the unconditional exchange of the limit $\lim_{p \to 0}$ and the policy gradient $\nabla_\theta$.

\section{Distribution Deformation}
This appendix supplements Section~\ref{sec:concentration} by providing formal proofs and broader theoretical contexts for our gradient concentration mechanism. Specifically,~\ref{localpro} and~\ref{globalpro} present the proofs for the local weight allocation (Theorem~\ref{thm:amplification}) and the global distributional deformation (Theorem~\ref{thm:deformation}), respectively. Furthermore,~\ref{app:exploration} discusses the profound connection between our three gradient concentration regimes and the traditional exploration-exploitation trade-off in reinforcement learning.  
\label{app:thm1}
\subsection{Local Property}\label{localpro}
In this section, we prove the following theorem, which is mentioned in Section~\ref{sec:concentration} as the local property of the token-level weight allocation $W_{i,t}(p)$ induced by the aggregation parameter $p$.
\begin{theorem}
\label{thm:amplification}
Given an initial parameter state $p_0$. Let $\mathcal{T}^* = \{ t \mid r_{i,t} = \max_k r_{i,k}(\theta) \}$ denote the set of strictly optimal tokens. As $p \to \infty$, $W_{i,t^*}(p) $ increases monotonically and converges to $1/{|\mathcal{T}^*|}$. For any $t \notin \mathcal{T}^*$, there exists a critical $p$-value $p_t > p_0$ such that $W_{i,t}(p)$ reaches its maximum at $p_t$, and strictly decays to zero thereafter as $p \to \infty$.
\end{theorem}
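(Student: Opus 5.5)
The plan is to work entirely in log-space. Write $a_t \coloneqq \log r_{i,t}(\theta)$ and $a_{\max} \coloneqq \max_k a_k$, so that Eq.~\eqref{eq:rho-grad} reads
\[
W_{i,t}(p)=\frac{e^{p a_t}}{\sum_k e^{p a_k}}.
\]
In other words, $W_i^p$ is the exponentially tilted (Gibbs) distribution on the token indices with ``energy'' $a_t$ and inverse temperature $p$. Everything then follows from two derivative identities and one limit computation.

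The first identity is the log-derivative of a single weight:
\[
\frac{\partial}{\partial p}\log W_{i,t}(p)=a_t-\mu_i(p), \qquad \mu_i(p)\coloneqq\sum_k W_{i,k}(p)\,a_k .
\]
Here $\mu_i(p)$ is the threshold referred to in Section~\ref{sec:concentration}. The second identity is the standard cumulant identity for the tilted mean:
\[
\mu_i'(p)=\sum_k W_{i,k}(p)\,a_k^2-\mu_i(p)^2=\mathrm{Var}_{W_i^p}(a)\ \ge 0 .
\]
This variance is strictly positive as soon as two ratios differ, because every $W_{i,k}(p)>0$. So $\mu_i$ is strictly increasing. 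For the limits, factor out $e^{p a_{\max}}$:
\[
W_{i,t}(p)=\frac{e^{p(a_t-a_{\max})}}{|\mathcal{T}^*|+\sum_{k\notin\mathcal{T}^*}e^{p(a_k-a_{\max})}} .
\]
As $p\to\infty$ each term $e^{p(a_k-a_{\max})}$ with $k\notin\mathcal{T}^*$ tends to $0$. This gives $W_{i,t^*}(p)\to 1/|\mathcal{T}^*|$ and $W_{i,t}(p)\to 0$ for $t\notin\mathcal{T}^*$, and also $\mu_i(p)\to a_{\max}$ (and symmetrically $\mu_i(p)\to a_{\min}$ as $p\to-\infty$).

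For $t^*\in\mathcal{T}^*$, the same factored expression settles monotonicity directly. Its numerator is $1$ and its denominator is strictly decreasing in $p$ whenever some ratio is non-maximal. Equivalently, $a_{t^*}-\mu_i(p)>0$ because $\mu_i(p)$ is a nondegenerate average of values all $\le a_{\max}$. In the degenerate case where all ratios are equal, $W$ is constant at $1/|\mathcal{T}^*|$, and I would note this as the trivial case.

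For $t\notin\mathcal{T}^*$, the sign of $\partial_p\log W_{i,t}$ is the sign of $a_t-\mu_i(p)$. Since $\mu_i$ is continuous and strictly increasing from $a_{\min}$ to $a_{\max}>a_t$, there is exactly one crossing point $p_t$ with $\mu_i(p_t)=a_t$ whenever $a_t>a_{\min}$. On one side $W_{i,t}$ strictly increases, it peaks at $p_t$, and beyond $p_t$ it strictly decreases, tending to $0$ by the limit computation above. This is exactly the ``rising threshold surpasses their log-ratios'' picture.

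The main obstacle is the clause $p_t>p_0$, which is not true for an arbitrary starting point $p_0$. If $\mu_i(p_0)\ge a_t$, for instance when $t\in\arg\min_k r_{i,k}$ or when $p_0$ is already large, then $W_{i,t}$ is strictly decreasing on the whole interval $[p_0,\infty)$. In that case the maximum over $[p_0,\infty)$ is attained at the boundary $p_0$ itself. I would therefore prove the statement in the precise form $p_t=\max\{p_0,\mu_i^{-1}(a_t)\}$ (with $p_t=p_0$ whenever $a_t=a_{\min}$, where $\mu_i^{-1}(a_t)$ does not exist), and remark that $p_t>p_0$ holds exactly when $\mu_i(p_0)<a_t$, for example for all $t\notin\arg\min_k r_{i,k}$ once $p_0$ is sufficiently negative. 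All remaining steps are routine differentiation.
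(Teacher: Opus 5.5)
Your argument is correct and follows the paper's proof essentially step for step. The paper also uses $\partial_p W_{i,t}=W_{i,t}(\log r_{i,t}-\mu_{y_i}(p))$, the identity $\mu_{y_i}'(p)=\mathrm{Var}_{W_i^p}(\log r_{i,t})>0$, and division by $r_{\max}^p$ to obtain the limits $1/|\mathcal{T}^*|$ and $0$.

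Where you depart from the paper is the crossing step, and your version is the sound one. The paper simply asserts that the rising mean must cross every non-maximal log-ratio at some $p_t$, and it never addresses $p_t>p_0$. That assertion fails in two ways:
\begin{itemize}
\item For a token attaining $\min_k r_{i,k}$, one has $\mu_{y_i}(p)>\log r_{i,t}$ for every finite $p$. So no crossing exists, and $W_{i,t}$ is strictly decreasing on all of $\mathbb{R}$.
\item For any $t\notin\mathcal{T}^*$, the clause $p_t>p_0$ fails as soon as $\mu_{y_i}(p_0)\ge\log r_{i,t}$.
\end{itemize}
Your use of both limits $\mu_i(p)\to a_{\min}$ and $\mu_i(p)\to a_{\max}$ together with the intermediate value theorem is what actually justifies the crossing when $a_{\min}<a_t<a_{\max}$. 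Your reformulation $p_t=\max\{p_0,\mu_i^{-1}(a_t)\}$, with $p_t=p_0$ for argmin tokens, is the correct version of the claim.
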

To prove Theorem~\ref{thm:amplification}, we first establish two fundamental lemmas regarding the dynamic weight allocation mechanism controlled by $p$. Let $\mu_{y_i}(p)$ denote the $W_{i,t}$-weighted mean of the log-ratios across the sequence:
\begin{equation}
    \mu_{y_i}(p) := \sum_{t=1}^{|y_i|} W_{i,t}(p) \log r_{i,t}(\theta).
\end{equation}
\begin{lemma}
\label{lem:weight_derivative}
The partial derivative of the gradient weight $W_{i,t}(p)$ with respect to the H\"{o}lder parameter $p$ is strictly governed by its log-ratio relative to the sequence mean $\mu_{y_i}(p)$:
\begin{equation}
    \frac{\partial W_{i,t}(p)}{\partial p} = W_{i,t}(p) \Big( \log r_{i,t}(\theta) - \mu_{y_i}(p) \Big).
\end{equation}
\end{lemma}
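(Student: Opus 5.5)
The plan is to treat $W_{i,t}(p)$ as a softmax in the variable $p$ and differentiate it directly. Write $\ell_k \coloneqq \log r_{i,k}(\theta)$, so that $r_{i,k}^p = e^{p\ell_k}$ and
\begin{equation*}
W_{i,t}(p) = \frac{e^{p\ell_t}}{Z(p)}, \qquad Z(p) \coloneqq \sum_{k=1}^{|y_i|} e^{p\ell_k}.
\end{equation*}
This representation is valid for every $p\in\mathbb{R}$, including $p=0$, since all ratios are strictly positive. Both numerator and denominator are smooth, and $Z(p)>0$, so $W_{i,t}$ is differentiable in $p$ with no case distinction.

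The key step is to take the logarithm:
\begin{equation*}
\log W_{i,t}(p) = p\ell_t - \log Z(p).
\end{equation*}
Differentiating gives $\partial_p \log W_{i,t}(p) = \ell_t - Z'(p)/Z(p)$. Then compute
\begin{equation*}
Z'(p) = \sum_k \ell_k e^{p\ell_k},
\end{equation*}
so that
\begin{equation*}
\frac{Z'(p)}{Z(p)} = \sum_k W_{i,k}(p)\,\ell_k = \mu_{y_i}(p),
\end{equation*}
which is exactly the weighted mean defined above. Multiplying back by $W_{i,t}(p)$, via $\partial_p W = W\,\partial_p\log W$, gives the claimed identity
\begin{equation*}
\frac{\partial W_{i,t}(p)}{\partial p} = W_{i,t}(p)\bigl(\log r_{i,t}(\theta) - \mu_{y_i}(p)\bigr).
\end{equation*}
Equivalently, one can apply the quotient rule to $e^{p\ell_t}/Z(p)$ and reach the same expression.

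There is no real obstacle here. The only points to state explicitly are that the $r_{i,k}$ are held fixed, since $\theta$ is fixed and only $p$ varies, and that positivity of the ratios makes the logarithm well defined. I would also remark that $\mu_{y_i}(p)=\frac{d}{dp}\log Z(p)$, and hence $\mu'_{y_i}(p)=\mathrm{Var}_{W^p}(\ell)\ge 0$. This is the monotone rising threshold that the proof of Theorem~\ref{thm:amplification} will subsequently use, though it is not needed for the lemma itself.
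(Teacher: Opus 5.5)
Your proof is correct and is essentially the paper's argument. The paper writes $W_{i,t}(p)$ in exponential (softmax) form and applies the quotient rule directly, whereas you differentiate $\log W_{i,t}(p) = p\log r_{i,t}(\theta) - \log Z(p)$ and read off $Z'(p)/Z(p) = \mu_{y_i}(p)$; this is the same computation arranged slightly more compactly.
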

\begin{proof}
To provide a complete calculation, we begin by rewriting the gradient weight definition in its exponential form. By expanding the base $r_{i,t}(\theta)^p$, the weight can be expressed as $$W_{i,t}(p) = \frac{\exp(p \log r_{i,t}(\theta))}{\sum_{k=1}^{|y_i|} \exp(p \log r_{i,k}(\theta))}.$$ 
Let $u = \exp(p \log r_{i,t}(\theta))$ and $v = \sum_{k=1}^{|y_i|} \exp(p \log r_{i,k}(\theta))$. Using the chain rule, the derivative of the numerator is simply $$\frac{\partial u}{\partial p} = \exp(p \log r_{i,t}(\theta)) \log r_{i,t}(\theta),$$
while the derivative of the denominator is 
\[
\frac{\partial v}{\partial p} = \sum_{k=1}^{|y_i|} \exp(p \log r_{i,k}(\theta)) \log r_{i,k}(\theta).
\]
The quotient rule formula is
\[
\frac{d}{dp}\left[\frac{u}{v}\right] = \frac{u'v - uv'}{v^2}.
\]
Now, we substitute these components back into the quotient rule formula 
\[
\frac{\partial W_{i,t}(p)}{\partial p} = \frac{\exp(p \log r_{i,t}(\theta)) \log r_{i,t}(\theta)}{\sum_{k} \exp(p \log r_{i,k}(\theta))} - \frac{\exp(p \log r_{i,t}(\theta)) \left[ \sum_{k} \exp(p \log r_{i,k}(\theta)) \log r_{i,k}(\theta) \right]}{\left[ \sum_{k} \exp(p \log r_{i,k}(\theta)) \right]^2}.
\]
Looking closely at the first term, we can isolate the definition of the original weight $W_{i,t}(p)$, leaving us with $W_{i,t}(p) \log r_{i,t}(\theta)$. For the second term, we can factor the fraction into the product of two separate fractions. The first fraction is exactly $W_{i,t}(p)$, and the second fraction represents the weighted sum over all tokens
\[
\frac{\partial W_{i,t}(p)}{\partial p} = W_{i,t}(p) \log r_{i,t}(\theta) - W_{i,t}(p) \left[ \sum_{k=1}^{|y_i|} \frac{\exp(p \log r_{i,k}(\theta))}{\sum_{j} \exp(p \log r_{i,j}(\theta))} \log r_{i,k}(\theta) \right].
\]
We know that the term inside the summation is simply $W_{i,k}(p) \log r_{i,k}(\theta)$, and the entire bracketed sum represents $\mu_{y_i}(p) = \sum_k W_{i,k}(p) \log r_{i,k}(\theta)$. Substituting this notation into our equation gives $$\frac{\partial W_{i,t}(p)}{\partial p} = W_{i,t}(p) \log r_{i,t}(\theta) - W_{i,t}(p) \mu_{y_i}(p)= W_{i,t}(p) \Big( \log r_{i,t}(\theta) - \mu_{y_i}(p) \Big).$$ 
\end{proof}
\begin{lemma}
\label{lem:mean_monotonicity}
Assuming the sequence contains at least two tokens with differing importance ratios, the weighted sequence mean $\mu_{y_i}(p)$ is strictly monotonically increasing with respect to $p$.
\end{lemma}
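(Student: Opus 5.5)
The plan is to differentiate $\mu_{y_i}(p)$ directly with respect to $p$ using Lemma~\ref{lem:weight_derivative}, and to show that the derivative equals a variance. Writing $\ell_t := \log r_{i,t}(\theta)$ and $\mu := \mu_{y_i}(p) = \sum_t W_{i,t}(p)\,\ell_t$, the log-ratios $\ell_t$ do not depend on $p$. Hence
\[
\frac{d\mu}{dp} \;=\; \sum_{t=1}^{|y_i|} \frac{\partial W_{i,t}(p)}{\partial p}\,\ell_t \;=\; \sum_{t=1}^{|y_i|} W_{i,t}(p)\,(\ell_t-\mu)\,\ell_t .
\]
Since $\sum_t W_{i,t}(p)(\ell_t-\mu) = \mu-\mu = 0$, we may subtract $\mu\sum_t W_{i,t}(p)(\ell_t-\mu)$ for free. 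This gives
\[
\frac{d\mu}{dp} = \sum_t W_{i,t}(p)(\ell_t-\mu)^2 = \mathrm{Var}_{W_i^p}(\ell),
\]
the variance of the log-ratios under the weight distribution $W_i^p$. Equivalently, one can note that $\mu(p)$ is the derivative of the log-partition function $\log\sum_k e^{p\ell_k}$, so $\mu'(p)$ is its second derivative, i.e.\ a variance. This serves as a sanity check on the computation.

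The second step is strict positivity. Every weight satisfies $W_{i,t}(p) = r_{i,t}^p/\sum_k r_{i,k}^p > 0$ for all real $p$, because the ratios are strictly positive. By assumption there exist two tokens with distinct ratios, hence distinct $\ell_t$. A weighted variance with strictly positive weights vanishes only if all $\ell_t$ equal $\mu$, which is impossible here. So $\mu'(p) > 0$ for every $p\in\mathbb{R}$, and strict monotonicity follows by the mean value theorem. The function $\mu$ is smooth in $p$, being a ratio of finite sums of exponentials with nonvanishing denominator.

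There is no real obstacle. The only point requiring care is the centering trick that turns $\sum_t W_t(\ell_t-\mu)\ell_t$ into a sum of squares. One must also observe that strict positivity of all weights holds for every $p$, including negative $p$ and $p=0$, where $W$ is uniform, so the argument covers the whole real line.
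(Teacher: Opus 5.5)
Your proposal is correct and follows the paper's proof essentially line by line. You differentiate $\mu_{y_i}(p)$ via Lemma~\ref{lem:weight_derivative}, subtract the identically zero term $\mu_{y_i}(p)\sum_t W_{i,t}(p)\bigl(\log r_{i,t}-\mu_{y_i}(p)\bigr)$ to obtain $\mathrm{Var}_{W_i^p}(\log r_{i,t})$, and conclude strict positivity from the distinct-ratio assumption; your explicit note that every weight is strictly positive for all real $p$, and your closing appeal to the mean value theorem, fill in details the paper leaves implicit.
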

\begin{proof}
Taking the derivative of $\mu_{y_i}(p)$ with respect to $p$, we have 
\[
    \frac{\partial \mu_{y_i}(p)}{\partial p} = \sum_{t=1}^{|y_i|} \frac{\partial W_{i,t}(p)}{\partial p} \log r_{i,t} = \sum_{t=1}^{|y_i|} W_{i,t}(p) \Big( \log r_{i,t} - \mu_{y_i}(p) \Big) \log r_{i,t}.
\]
Since $\sum_{t=1}^{|y_i|} W_{i,t}(p) = 1$ and by the definition of the mean $\mu_{y_i}(p) = \sum_{t=1}^{|y_i|} W_{i,t}(p) \log r_{i,t}$, we have $$\sum_{t=1}^{|y_i|} W_{i,t}(p) \Big( \log r_{i,t} - \mu_{y_i}(p) \Big) = \mu_{y_i}(p) - \mu_{y_i}(p) = 0.$$
Multiplying this entire zero-sum by the constant $\mu_{y_i}(p)$ yields $$\sum_{t=1}^{|y_i|} W_{i,t}(p) \mu_{y_i}(p) \Big( \log r_{i,t} - \mu_{y_i}(p) \Big) = 0.$$
We can subtract this identically zero term from our derivative equation without changing its value. By grouping the common factor $W_{i,t}(p) \big( \log r_{i,t} - \mu_{y_i}(p) \big)$, the equation collapses into a squared difference
\begin{align*}
\frac{\partial \mu_{y_i}(p)}{\partial p} &= \sum_{t=1}^{|y_i|} W_{i,t}(p) \Big( \log r_{i,t} - \mu_{y_i}(p) \Big) \log r_{i,t} - \sum_{t=1}^{|y_i|} W_{i,t}(p) \mu_{y_i}(p) \Big( \log r_{i,t} - \mu_{y_i}(p) \Big)\\
&= \sum_{t=1}^{|y_i|} W_{i,t}(p) \Big( \log r_{i,t} - \mu_{y_i}(p) \Big)^2. 
\end{align*}
This final expression is exactly the definition of the variance of $\log r_{i,t}$ under the weight distribution $W_i^p$, denoted as $\text{Var}_{W_i^p} (\log r_{i,t})$. Given our assumption that the sequence contains at least two tokens with differing importance ratios, this variance is strictly positive.
\end{proof} 
\begin{proof}[Proof of Theorem~\ref{thm:amplification}]
By definition, $\mathcal{T}^*$ contains the tokens with the strictly maximum importance ratio. Since the weighted sequence mean $\mu_{y_i}(p)$ is a convex combination of all token log-ratios (with $W_{i,k}(p) > 0$ for all finite $p$), it must be strictly bounded by the maximum value: $\mu_{y_i}(p) < \log r_{i,t^*}$ for any finite $p$. By Lemma~\ref{lem:weight_derivative}, the derivative of the weight is governed by its deviation from this mean: $\frac{\partial W_{i,t^*}(p)}{\partial p} = W_{i,t^*}(p) (\log r_{i,t^*} - \mu_{y_i}(p))$. Because both the weight and the deviation are strictly positive, the weight of any optimal token increases monotonically as $p$ grows.

Furthermore, Lemma~\ref{lem:mean_monotonicity} establishes that $\mu_{y_i}(p)$ is strictly monotonically increasing with $p$. Since it is continuously increasing and bounded above by the maximum log-ratio, it is convergent as $p \rightarrow +\infty$. By dividing the numerator and the denominator of $W_{i,t}(p)$ by $r_{i,t^*}^p$, where $r_{i,t^*}$ is the maximum ratio, we rewrite the weight as
\begin{equation*}
    W_{i,t}(p) = \frac{ \left( \frac{r_{i,t}}{r_{i,t^*}} \right)^p }{ \sum_{k=1}^{|y_i|} \left( \frac{r_{i,k}}{r_{i,t^*}} \right)^p }.
\end{equation*}
For any optimal token $t^* \in \mathcal{T}^*$, the base is $1$. For any sub-optimal token $k \notin \mathcal{T}^*$, the base is strictly less than $1$, causing $(r_{i,k}/r_{i,t^*})^p \to 0$ as $p \to \infty$. Consequently, the denominator converges exactly to $|\mathcal{T}^*|$, the total number of optimal tokens. Thus, the weight distribution concentrates entirely on the optimal subset: $\lim_{p \to \infty} W_{i,t^*}(p) = 1/|\mathcal{T}^*|$ and $\lim_{p \to \infty} W_{i,k}(p) = 0$. Since the sequence mean is defined as $\mu_{y_i}(p) = \sum_{t} W_{i,t}(p) \log r_{i,t}$, taking the limit yields:
\begin{equation*}
    \lim_{p \to \infty} \mu_{y_i}(p) = \sum_{t \in \mathcal{T}^*} \left( \frac{1}{|\mathcal{T}^*|} \log r_{i,t^*} \right) + \sum_{k \notin \mathcal{T}^*} \left( 0 \cdot \log r_{i,k} \right) = \log r_{i,t^*}.
\end{equation*}
Combining this exact limit with Lemma~\ref{lem:mean_monotonicity}, we establish that $\mu_{y_i}(p)$ strictly and monotonically approaches the maximum log-ratio $\log r_{i,t^*}$.

For any sub-optimal token $k \notin \mathcal{T}^*$, its log-ratio is strictly less than the maximum ($\log r_{i,k} < \log r_{i,t^*}$). Because $\mu_{y_i}(p)$ continuously sweeps upward towards $\log r_{i,t^*}$, there must exist a critical point $p_t$ where the rising mean exactly crosses the token's log-ratio, yielding $\mu_{y_i}(p_t) = \log r_{i,k}$. For all $p > p_t$, the sequence mean surpasses the token's log-ratio ($\mu_{y_i}(p) > \log r_{i,k}$), which flips the sign of its derivative $\frac{\partial W_{i,k}(p)}{\partial p}$ to negative. Consequently, $W_{i,k}(p)$ reaches its peak at $p_t$ and strictly decays thereafter. As $p \to \infty$, the exponential growth of the optimal tokens' weights strictly dominates the denominator, forcing the weight of all sub-optimal tokens to decay exactly to $0$, and leaving the probability mass uniformly distributed exclusively among the optimal subset with weight $1/|\mathcal{T}^*|$.
\end{proof}

\subsection{Global Property}\label{globalpro}
In this section, we prove the following Theorem~\ref{thm:deformation}, which is mentioned in Section~\ref{sec:concentration} as the global property of the sequence-level distributional deformation induced by the aggregation parameter $p$.

\begin{theorem*}
Assume the sequence $y_i$ contains at least two tokens with distinct importance ratios. Then the Shannon entropy of the weight distribution attains its global maximum at $p = 0$, where $W^0_i = \tfrac{1}{|y_i|} \mathrm{Unif}$, and strictly decreases as $|p|$ increases. Moreover, as $p \to \pm\infty$, $W_i^p$ concentrates uniformly on the subset $\mathcal{T}^{+} = \arg\max_t r_{i,t}(\theta)$ and $\mathcal{T}^{-} = \arg\min_t r_{i,t}(\theta)$, respectively.
\end{theorem*}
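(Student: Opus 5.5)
The plan is to treat $W_i^p$ as a Gibbs (softmax) family in the log-ratios and to compute the derivative of its entropy in closed form. Write $\ell_t \coloneqq \log r_{i,t}(\theta)$ and $Z(p) \coloneqq \sum_{k=1}^{|y_i|} e^{p\ell_k}$, so that $W_{i,t}(p) = e^{p\ell_t}/Z(p)$. Then
\[
H(p) \coloneqq -\sum_t W_{i,t}(p)\log W_{i,t}(p) = \log Z(p) - p\,\mu_{y_i}(p),
\]
where $\mu_{y_i}(p)=\sum_t W_{i,t}(p)\ell_t$ is the weighted mean introduced before Lemma~\ref{lem:weight_derivative}. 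This identity follows by substituting $\log W_{i,t}(p) = p\ell_t - \log Z(p)$ and using $\sum_t W_{i,t}(p)=1$.

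Next I differentiate. Since $\frac{d}{dp}\log Z(p) = \sum_t W_{i,t}(p)\ell_t = \mu_{y_i}(p)$, we get $H'(p) = \mu_{y_i}(p) - \mu_{y_i}(p) - p\,\mu_{y_i}'(p) = -p\,\mu_{y_i}'(p)$. The proof of Lemma~\ref{lem:mean_monotonicity} shows that $\mu_{y_i}'(p)=\mathrm{Var}_{W_i^p}(\ell)$. This variance is strictly positive for every finite $p$, because all weights $W_{i,t}(p)$ are strictly positive and, by hypothesis, at least two of the $\ell_t$ differ. Hence
\[
H'(p) = -p\,\mathrm{Var}_{W_i^p}(\ell_t),
\]
which is $>0$ for $p<0$ and $<0$ for $p>0$. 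So $H$ is strictly increasing on $(-\infty,0]$ and strictly decreasing on $[0,\infty)$. This means $H$ strictly decreases as $|p|$ grows along each half-line, and its unique global maximum is at $p=0$. At $p=0$ every weight equals $1/|y_i|$, so $W_i^0$ is the uniform distribution and $H(0)=\log|y_i|$.

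For the limits, the case $p\to+\infty$ is exactly the concentration argument in the proof of Theorem~\ref{thm:amplification}. Dividing numerator and denominator by $(\max_k r_{i,k})^p$, the weights converge to $1/|\mathcal{T}^+|$ on $\mathcal{T}^+$ and to $0$ elsewhere. For $p\to-\infty$ I use the symmetry $W_{i,t}(p;\,r) = W_{i,t}(-p;\,1/r)$, i.e.\ replacing every ratio by its reciprocal and flipping the sign of $p$. The argmax of $1/r_{i,t}$ is $\mathcal{T}^-=\arg\min_t r_{i,t}$, so the same argument gives uniform concentration on $\mathcal{T}^-$. Equivalently, one can divide by $(\min_k r_{i,k})^p$ directly. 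As a consistency check, $H(p)\to\log|\mathcal{T}^\pm|<\log|y_i|$ in the two limits.

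I do not expect a serious obstacle. The only point needing care is the strictness claim, which rests on the variance being nonzero at every finite $p$ and not merely at $p=0$. This holds because softmax weights never vanish at finite $p$, so the distinct log-ratios always retain positive mass. I would also make explicit that ``strictly decreases as $|p|$ increases'' is meant separately on each half-line: $H(p)$ and $H(-p)$ need not be equal, since the entropy is not an even function of $p$ in general.
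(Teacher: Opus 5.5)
Your proposal is correct and follows essentially the same route as the paper: both establish $\partial_p \mathcal{H}(W_i^p) = -p\,\mathrm{Var}_{W_i^p}(\log r_{i,t})$, read off the sign on each half-line, and get the $p\to\pm\infty$ limits by dividing by $(\max_k r_{i,k})^p$ or $(\min_k r_{i,k})^p$. The only difference is bookkeeping: you obtain the derivative from $\mathcal{H} = \log Z(p) - p\,\mu_{y_i}(p)$ together with $\mu_{y_i}'(p) = \mathrm{Var}_{W_i^p}$ from Lemma~\ref{lem:mean_monotonicity}, whereas the paper differentiates $-\sum_t W_{i,t}\ln W_{i,t}$ directly and drops the $\ln Z(p)$ term.
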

\begin{proof}

The Shannon entropy of the weight distribution is defined as 

$$\mathcal{H}(W_i^p) := -\sum_t W_{i,t}(p) \ln W_{i,t}(p).$$ 

To analyse its monotonicity, we compute the derivative of $\mathcal{H}$ with respect to $p$.
First, we compute the derivative of the token weight $W_{i,t}$. Let $$\mathbb{E}_{W_i^p}[\ln r_{i,t}] \coloneqq \sum_{k=1}^{|y_i|} W_{i,k} \ln r_{i,k}$$ denote the expected log-ratio under the current weight distribution. The derivative of the weight is given by
$$ \frac{\partial W_{i,t}}{\partial p} = W_{i,t} \left( \ln r_{i,t} - \sum_k W_{i,k} \ln r_{i,k} \right) = W_{i,t} (\ln r_{i,t} - \mathbb{E}_{W_i^p}[\ln r_{i,t}]).$$
Next, taking the derivative of the entropy yields
$$ \frac{\partial \mathcal{H}}{\partial p} = -\sum_t \left( \frac{\partial W_{i,t}}{\partial p} \ln W_{i,t} + W_{i,t} \frac{\partial \ln W_{i,t}}{\partial p} \right).$$
Notice that 
$$\sum_t W_{i,t} \frac{\partial \ln W_{i,t}}{\partial p} = \sum_t \frac{\partial W_{i,t}}{\partial p} = \frac{\partial}{\partial p}\left(\sum_t W_{i,t}\right) = 0.$$
Thus, the entropy derivative simplifies to $$\frac{\partial \mathcal{H}}{\partial p} = -\sum_t \frac{\partial W_{i,t}}{\partial p} \ln W_{i,t}.$$
To proceed, we explicitly write out $\ln W_{i,t}$. Let $Z(p) \coloneqq \sum_k r_{i,k}^p$.  Since $W_{i,t} = r_{i,t}^p / Z(p)$, taking the natural logarithm gives
$$\ln W_{i,t} = p \ln r_{i,t} - \ln \sum_k r_{i,k}^p = p \ln r_{i,t} - \ln Z(p).$$
Substituting both $\frac{\partial W_{i,t}}{\partial p}$ and $\ln W_{i,t}$ into the simplified entropy derivative, we obtain
$$\frac{\partial \mathcal{H}}{\partial p} = -\sum_t \Big[ W_{i,t} \big( \ln r_{i,t} - \mathbb{E}_{W_i^p}[\ln r_{i,t}] \big) \Big] \Big[ p \ln r_{i,t} - \ln Z(p) \Big].$$
We can expand this product into two separate sums. Notice that the expected deviation from the mean is identically zero. Specifically, because $\mathbb{E}_{W_i^p}[\ln r_{i,t}]$ is a constant with respect to the summation index $t$, we have
$$\sum_t W_{i,t} \big( \ln r_{i,t} - \mathbb{E}_{W_i^p}[\ln r_{i,t}] \big) = \sum_t W_{i,t} \ln r_{i,t} - \mathbb{E}_{W_i^p}[\ln r_{i,t}] \sum_t W_{i,t} =  0.$$
Because this term is zero, any constant multiplier distributed into it vanishes. When we distribute the expanded brackets, the term multiplied by the constant $\ln Z(p)$ completely drops out
\[
\sum_t W_{i,t} \big( \ln r_{i,t} - \mathbb{E}_{W_i^p}[\ln r_{i,t}] \big) \ln Z(p) = 0.
\]
This leaves only the term
$$\frac{\partial \mathcal{H}}{\partial p} = -p \sum_t W_{i,t} \big( \ln r_{i,t} - \mathbb{E}_{W_i^p}[\ln r_{i,t}] \big) \ln r_{i,t}.$$
Finally, we expand the remaining summation by distributing $\ln r_{i,t}$
$$\frac{\partial \mathcal{H}}{\partial p} = -p \left( \sum_t W_{i,t} (\ln r_{i,t})^2 - \mathbb{E}_{W_i^p}[\ln r_{i,t}] \sum_t W_{i,t} \ln r_{i,t} \right).$$
Recognising that $\sum_t W_{i,t} \ln r_{i,t}$ is exactly $\mathbb{E}_{W_i^p}[\ln r_{i,t}]$, this equation collapses into the definition of variance
$$\frac{\partial \mathcal{H}}{\partial p} = -p \left( \mathbb{E}_{W_i^p}[(\ln r_{i,t})^2] - \big( \mathbb{E}_{W_i^p}[\ln r_{i,t}] \big)^2 \right) = -p \cdot \text{Var}_{W_i^p}(\ln r_{i,t}).$$
Since the sequence contains non-uniform importance ratios, the variance is strictly positive ($\text{Var}_{W_i^p}(\ln r_{i,t}) > 0$). Therefore for $p > 0$, $\frac{\partial \mathcal{H}}{\partial p} < 0$, meaning $\mathcal{H}$ strictly decreases.
For $p < 0$, $\frac{\partial \mathcal{H}}{\partial p} > 0$, meaning $\mathcal{H}$ strictly increases towards $p=0$ (or decreases as $p \to -\infty$).
At $p = 0$, $W_i^0$ becomes a uniform distribution where each token is assigned an identical weight of $1/|y_i|$, and $\mathcal{H}$ reaches its global maximum $\ln |y_i|$.

Finally, we evaluate the limits as $p \to \pm\infty$. Let $r_{\max} = \max_t r_{i,t}$ and $\mathcal{M}^* = \{k \mid r_{i,k} = r_{\max}\}$. We can rewrite $W_{i,t}(p)$ by dividing the numerator and denominator by $r_{\max}^p$:
    $$ W_{i,t}(p) = \frac{(r_{i,t}/r_{\max})^p}{\sum_{k \in \mathcal{M}^*} 1 + \sum_{j \notin \mathcal{M}^*} (r_{i,j}/r_{\max})^p}. $$
    For $j \notin \mathcal{M}^*$, $r_{i,j}/r_{\max} < 1$, so $(r_{i,j}/r_{\max})^p \to 0$ as $p \to \infty$. Thus, the denominator converges to $|\mathcal{M}^*|$.
    For the numerator, if $t \in \mathcal{M}^*$, $(r_{i,t}/r_{\max})^p = 1$. If $t \notin \mathcal{M}^*$, $(r_{i,t}/r_{\max})^p \to 0$. Therefore, $\lim_{p \to \infty} W_{i,t}(p) = \frac{1}{|\mathcal{M}^*|}$ for $t \in \mathcal{M}^*$, and $0$ otherwise.   
    
Let $r_{\min} = \min_t r_{i,t}$ and $\mathcal{M}_{\min} = \{k \mid r_{i,k} = r_{\min}\}$. Let $q = -p$, so $q \to \infty$. We can rewrite the weight as:
    $$ W_{i,t}(-q) = \frac{(1/r_{i,t})^q}{\sum_k (1/r_{i,k})^q} = \frac{(r_{\min}/r_{i,t})^q}{\sum_{k \in \mathcal{M}_{\min}} 1 + \sum_{j \notin \mathcal{M}_{\min}} (r_{\min}/r_{i,j})^q}. $$
    Since $r_{\min}/r_{i,j} < 1$ for $j \notin \mathcal{M}_{\min}$, the term $(r_{\min}/r_{i,j})^q \to 0$ as $q \to \infty$. Following the exact same logic, the distribution collapses to a uniform distribution over $\mathcal{M}_{\min}$.
\end{proof}

\subsection{Gradient Concentration vs. Exploration-Exploitation Trade-off}\label{app:exploration}

As established in Section~\ref{sec:concentration}, the aggregation parameter $p$ induces three distinct gradient concentration regimes: upward concentration ($p > 0$) strictly allocates the gradient weights onto high-ratio tokens, uniform dispersion ($p \to 0$) distributes the gradient equally across all tokens, and downward concentration ($p < 0$) upweights the gradient contributions of low-ratio, hesitant tokens. Conceptually, dynamically shifting between these regimes closely mirrors the classical exploration-exploitation dilemma in reinforcement learning \citep{sutton2018reinforcement}. However, the exact nature of this mechanism in the context of LLMs requires careful theoretical contextualisation.

\paragraph{Is a large $p$ considered ``Exploitation''?}
When $p \gg 0$, the algorithm hyper-focuses the gradient updates on tokens where the current policy has already shown the most aggressive improvement relative to the reference policy (i.e., maximal importance ratios). In traditional RL, exploitation implies a behavioural shift---acting greedily according to the current value function during environmental interaction. In our framework, however, a large $p$ acts as a form of post-hoc exploitation that precisely targets two urgent algorithmic crises recently identified in LLM reasoning: the distribution sharpening trap and spurious rewards.

Recent work by \citep{he2025rewarding} reveals that standard GRPO is fundamentally constrained by a distribution sharpening effect, predominantly rewarding tokens that are already likely while failing to amplify sparse, unlikely, yet correct reasoning leaps. Furthermore, \citep{shao2025spurious} demonstrate that Reinforcement Learning with Verifiable Rewards (RLVR) is heavily plagued by spurious signals, where flawed intermediate logic coincidentally yields a correct final answer. When sequence-level advantages are distributed uniformly across the entire trajectory, the optimiser inevitably reinforces this uninformative or even toxic pseudo-logic.

Our upward concentration mechanism ($p > 0$) provides a mathematically principled resolution to both vulnerabilities. It does not exploit by altering the sampling trajectory, but by aggressively filtering the learning signal. By exponentially amplifying the gradient weights of rare, high-ratio tokens, it bypasses the sharpening trap to successfully obtain genuine ``aha moments'' \citep{he2025rewarding}. Simultaneously, by starving the gradient from the bulk of unremarkable tokens, it naturally defends the policy against the integration of spurious background noise \citep{shao2025spurious}. This theoretical intuition is vividly corroborated by our empirical results in Table~\ref{tab:combined_results}: on the AIME24 benchmark, where correct reasoning steps are exceptionally sparse, a highly aggressive static configuration of $p=3$ achieves the peak accuracy of $46.7\%$. Furthermore, as demonstrated in Figure~\ref{fig:traindynamic}, setting $p=+2$ rapidly drives the policy entropy down during the early training stages, visually confirming this intense knowledge-sharpening and mass-concentration effect.

\paragraph{Is a negative $p$ considered ``Exploration''?}
Conversely, when $p < 0$, the gradient concentration shifts toward tokens where the model exhibits hesitation or deviation from previously confident paths. In standard continuous-control RL, exploration is typically enforced via entropy bonuses in the objective \citep{haarnoja2018soft, schulman2017proximal} or noise injection during sampling. While $p < 0$ empirically preserves reasoning diversity, it is not an exploration mechanism in the active sense. Instead, it serves as retrospective diversity preservation. By upweighting less-confident tokens within successful trajectories, it forces the optimiser to consolidate alternative, unconventional reasoning pathways rather than collapsing into a single, greedy solution. We observe this exact dynamic in Figure~\ref{fig:traindynamic}, where a static $p=-2$ sustains significantly higher entropy levels across the entire training trajectory compared to positive $p$ values, actively resisting mode collapse. Moreover, Figure~\ref{fig:logratio} illustrates that decreasing $p$ systematically tightens the gap between the upper and lower envelopes of token-level ratios, redistributing credit to underemphasised tokens. This variance-controlling mechanism proves exceptionally beneficial for dense-signal tasks like MATH500, which strictly favours lower $p$ values (peaking at $p=-1$) to maintain stable optimisation.

To formalise this critical boundary between our gradient mechanisms and traditional RL terminology, we state the following remark.

\begin{remark}
While our concentration mechanism conceptually echoes the exploration-exploitation tradeoff, with $p < 0$ preserving diversity and $p > 0$ sharpening known knowledge, it must not be conflated with classical exploration. In standard RL, exploration refers to actively altering the trajectory sampling distribution (the behavioural policy) to visit unseen states. In contrast, our parameter $p$ operates entirely on the hindsight aggregation of already-sampled trajectories. It functions strictly as a gradient reweighting mechanism, reshaping how the optimisation priority is distributed across a fixed rollout without intervening in how the rollouts are generated.
\end{remark}

\section{Variance Behaviours}
\label{app:thm2}
This appendix provides analysis of the policy gradient variance under the H\"{o}lderPO framework. We first establish a monotonic upper bound for the variance of the unclipped estimator in Section~\ref{E1}, then immediately extend it to the sequence-level clipping case and formalise the structural necessity of sequence-level clipping in Section~\ref{I2}. Subsequently, we derive a more refined variance expression in Section~\ref{orthogonalv} under the assumption of token-gradient orthogonality stated in Section~\ref{or}.

\subsection{An upper bound with monotonicity}\label{E1}
In this section, we prove another version of Theorem~\ref{thm:variance} for the unclipped gradient estimator (\ref{unclibe}).
\begin{theorem}\label{varbound1}
Let $\widehat{\nabla}_\theta \mathcal{J}_{H}$ (Eq.~(\ref{unclibe})) denote the estimator. Assume $\|\nabla_\theta \log \pi_\theta(y_{i,t} \mid x, y_{i,<t})\| \le M$ for all tokens within the batch, the variance admits the bound 
\begin{equation}
\label{eq:var-bound1}
    \|\mathrm{Var}(\widehat{\nabla}_\theta \mathcal{J}_{H})\| \;\le\; \frac{M^2}{B}\, \mathbb{E}\!\left[ \widehat{A}_i^{\,2}\, \rho_{i,p}^2(\theta) \right],
\end{equation}
which is monotonically increasing in $p$ for all $p \in \mathbb{R}$, where $B$ is the batch size.
\end{theorem}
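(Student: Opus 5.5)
We prove the bound in two parts: first the inequality (\ref{eq:var-bound1}), then the monotonicity of its right-hand side in $p$.

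\textbf{Step 1: reduce to a per-sequence term.} The estimator $\widehat{\nabla}_\theta \mathcal{J}_{H}$ in (\ref{unclibe}) is an average of $B$ i.i.d.\ per-prompt terms. Its variance is therefore $\frac{1}{B}$ times the variance of a single term $X := \frac{1}{G}\sum_i \widehat{A}_i \nabla_\theta\rho_{i,p}(\theta)$. We bound the variance by the second moment, $\|\mathrm{Var}(X)\| \le \mathbb{E}\|X\|^2$; any trace or operator norm of the covariance is dominated by $\mathbb{E}\|X\|^2$. By convexity of $\|\cdot\|^2$ (Jensen over the uniform average in $i$), we get $\|X\|^2 \le \frac{1}{G}\sum_i \widehat{A}_i^2\|\nabla_\theta\rho_{i,p}\|^2$. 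Taking expectations and using exchangeability of the $G$ rollouts gives $\mathbb{E}\|X\|^2\le \mathbb{E}[\widehat{A}_i^2\|\nabla_\theta \rho_{i,p}\|^2]$.

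\textbf{Step 2: bound the gradient norm.} By (\ref{eq:rho-grad}), $\nabla_\theta\rho_{i,p} = \rho_{i,p}\sum_t W_{i,t}(p)\nabla_\theta\log\pi_\theta(y_{i,t}\mid\cdot)$. The weights $W_{i,t}(p)$ form a probability distribution, so the triangle inequality together with the assumption $\|\nabla_\theta\log\pi_\theta\|\le M$ gives $\|\nabla_\theta\rho_{i,p}\|\le \rho_{i,p} M$. Squaring and substituting into Step 1 yields (\ref{eq:var-bound1}). The convex-combination structure of $W_i^p$ is what makes this bound independent of $|y_i|$ and of $p$, apart from the factor $\rho_{i,p}$.

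\textbf{Step 3: monotonicity in $p$.} It suffices to show that $p\mapsto\rho_{i,p}(\theta)$ is nondecreasing for every fixed sample. Since $\widehat{A}_i^2\ge 0$ and the ratios are positive, monotonicity then passes through the square and through the expectation. This is the classical power-mean inequality. To stay self-contained, we prove it with the machinery already developed in the paper.

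Write $\log\rho_{i,p} = \frac{1}{p}\log\big(\frac{1}{n}\sum_t e^{p\ell_t}\big)$ with $\ell_t=\log r_{i,t}$, and set $f(p)=\log\frac1n\sum_t e^{p\ell_t}$. Then $f(0)=0$, and $f'(p)=\mu_{y_i}(p)$, which is nondecreasing by Lemma~\ref{lem:mean_monotonicity}. Hence $f$ is convex, and $\log\rho_{i,p}=(f(p)-f(0))/(p-0)$ is the slope of a secant through the origin. Secant slopes of a convex function are nondecreasing in the other endpoint. The case $p=0$ is covered by continuity, via Lemma~\ref{p=0}, where the slope equals $f'(0)$. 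The monotonicity is strict unless all ratios coincide.

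The main obstacle is Step 1: the meaning of ``$\|\mathrm{Var}\|$'' for a vector-valued estimator, and the dependence among the $G$ rollouts through $\widehat{A}_i$. We handle it by committing to the second-moment (trace) interpretation and by using Jensen rather than independence. This means no cross-term cancellation is needed. The group normalisation of $\widehat{A}_i$ introduces dependence, but this is harmless because only the marginal expectation $\mathbb{E}[\widehat{A}_i^2\rho_{i,p}^2]$ is required.
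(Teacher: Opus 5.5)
Your proof is correct and follows the same skeleton as the paper's: bound the variance by the second moment, apply the triangle inequality over the convex weights $W_{i,t}(p)$ to get $\|\nabla_\theta\rho_{i,p}\|\le M\rho_{i,p}$, and conclude with monotonicity of the power mean in $p$. Two of your steps tighten the paper's argument rather than change it: you handle the group average with Jensen plus exchangeability, whereas the paper writes the estimator as $\frac{1}{B}\sum_i \hat{g}_i(p)$ over single independent trajectories and so glosses over the dependence introduced by group-normalised advantages, and you prove the power-mean inequality via convexity of $f$, which the paper simply cites.
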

\begin{proof}
We compute the unbiased estimator of the policy gradient by sampling a mini-batch of size $B$, denoted as $\widehat{\nabla}_\theta \mathcal{J}_{\mathrm{H}}$. For a mini-batch containing $B$ i.i.d. sampled trajectories, we have
\[
\widehat{\nabla}_\theta \mathcal{J}_{\mathrm{H}} = \frac{1}{B} \sum_{i=1}^B \hat{g}_i(p),
\]
where the unclipped single-step stochastic gradient $\hat{g}_i(p)$ is
\[
\hat{g}_i(p) = \widehat{A}_i \cdot \nabla_\theta \rho_{i,p}(\theta) = \widehat{A}_i \cdot \left[ \frac{\rho_{i,p}(\theta)^{1-p}}{|y_i|} \sum_{t=1}^{|y_i|} r_{i,t}(\theta)^p \nabla_\theta \log \pi_\theta(y_{i,t} \mid x, y_{i,<t}) \right].
\]
Since every rollout in the mini-batch is independent, we obtain
\[
\text{Var}(\widehat{\nabla}_\theta \mathcal{J}_{\mathrm{H}}) = \text{Var}\left( \frac{1}{B} \sum_{i=1}^B \hat{g}_i(p) \right) = \frac{1}{B^2} \sum_{i=1}^B \text{Var}(\hat{g}_i(p)) = \frac{1}{B} \text{Var}(\hat{g}_1(p)).
\]
For any stochastic gradient,  its variance $\text{Var}(\hat{g}_i(p))$ is strictly bounded by its second moment
\[
\text{Var}(\hat{g}_i(p)) = \mathbb{E}[||\hat{g}_i(p)||^2] - ||\mathbb{E}[\hat{g}_i(p)]||^2 \le \mathbb{E}[||\hat{g}_i(p)||^2].
\]
By applying the triangle inequality and  $||\nabla_\theta \log \pi_\theta(y_{i,t})|| \le M$, we can obtain the upper bound
\begin{align*}
     \lVert \nabla_\theta \rho_{i,p}(\theta) \rVert &=\left\lVert \frac{\rho_{i,p}(\theta)^{1-p}}{|y_i|} \sum_{t=1}^{|y_i|} r_{i,t}(\theta)^p \nabla_\theta \log \pi_\theta(y_{i,t} \mid x, y_{i,<t})\right\rVert\\
    &\le \frac{\rho_{i,p}(\theta)^{1-p}}{|y_i|} \sum_{t=1}^{|y_i|} r_{i,t}(\theta)^p \lVert \nabla_\theta \log \pi_\theta(y_{i,t} \mid x, y_{i,<t}) \rVert \nonumber \\
    &\le M \cdot \frac{\rho_{i,p}(\theta)^{1-p}}{|y_i|} \sum_{t=1}^{|y_i|} r_{i,t}(\theta)^p=  M \cdot \rho_{i,p}(\theta).
\end{align*}
Thus we can bound the squared norm of $\hat{g}_i(p)$ by 
\[
||\hat{g}_i(p)||^2 = \widehat{A}_i^2 \cdot ||\nabla_\theta \rho_{i,p}(\theta)||^2 \le \widehat{A}_i^2 \cdot \left( M \cdot \rho_{i,p}(\theta) \right)^2 = M^2 \cdot \widehat{A}_i^2 \cdot \rho_{i,p}(\theta)^2,
\]
which implies the upper bound of variance of $\widehat{\nabla}_\theta \mathcal{J}_{\mathrm{H}}$
\[
\text{Var}(\widehat{\nabla}_\theta \mathcal{J}_{\mathrm{H}})\le \frac{1}{B}\mathbb{E}[||\hat{g}_i(p)||^2] \le \frac{M^2}{B} \cdot \mathbb{E}_{q, \{y_i\}} \left[ \widehat{A}_i^2 \cdot \rho_{i,p}(\theta)^2 \right].
\]
According to the Generalised Mean Inequality, for any non-uniform sequence of importance ratios, the $p$-mean $\rho_{i,p}(\theta)$ is a strictly monotonically increasing function of $p$. Thus we obtain the conclusion.
\end{proof}

\subsection{Variance and Sequence-level Clipping}\label{I2}
To ensure training stability, policy optimisation algorithms typically employ clipping mechanisms to prevent destructively large updates. In the H\"{o}lderPO framework, we explicitly adopt a \emph{sequence-level clipping} strategy rather than the standard token-level clipping. In this section, we formalise the mathematical necessity of clipping itself, and justify why sequence-level clipping is structurally required to preserve the variance properties established in Theorem~\ref{varbound1}.

\paragraph{Why Clipping is Necessary.}
To understand why the unclipped case is susceptible to exponential explosion, we can analyse its gradient dynamics through an ordinary differential equation. Recall
\begin{equation*}
    \nabla_\theta \rho_{i,p}(\theta) \;=\; \rho_{i,p}(\theta) \sum_{t=1}^{|y_i|} W_{i,t}(p) \nabla_\theta \log \pi_\theta(y_{i,t} \mid x, y_{i,<t}).
\end{equation*}
By abstracting the weighted sum of token-level log-gradients into a single vector $g_w(\theta)$, the gradient equation simplifies to a form 
\[
\nabla_\theta \rho_{i,p}(\theta) = \rho_{i,p}(\theta) g_w(\theta).
\]
During neural network training, the parameters $\theta$ are not static; they continuously evolve along an optimisation trajectory parameterised by a continuous virtual time $\tau$. Applying the chain rule, the rate of change of the ratio with respect to this optimisation time is given by the derivative 
\[
\frac{d \rho_{i,p}}{d\tau} = \big( \nabla_\theta \rho_{i,p}(\theta) \big)^T \frac{d\theta}{d\tau}.
\]
Assuming a standard gradient ascent step aimed at maximising a trajectory with a positive advantage, the parameter update direction $\frac{d\theta}{d\tau}$ naturally aligns with the gradient. Substituting our simplified gradient expression into the derivative yields
\[
\frac{d \rho_{i,p}}{d\tau} = \big( \rho_{i,p} g_w(\theta) \big)^T \frac{d\theta}{d\tau} = \rho_{i,p} \left( g_w(\theta)^T \frac{d\theta}{d\tau} \right).
\]
Because the optimiser attempts to increase the likelihood of these correct tokens, the update direction $\frac{d\theta}{d\tau}$ forms an acute angle with the composite gradient direction $g_w(\theta)$. This implies that their inner product is a strictly positive scalar, which we can denote as $k(\tau) > 0$. Substituting this scalar reduces the complex optimisation dynamics into a canonical ODE for exponential growth 
\[
\frac{d \rho_{i,p}}{d\tau} = k(\tau) \rho_{i,p}.
\]
Integrating this differential equation over the time interval $[0, \tau]$ provides the exact analytical solution
\[
\rho_{i,p}(\tau) = \rho_{i,p}(0) \exp\left( \int_0^\tau k(t) \, dt \right).
\]
This mathematically dictates that without an explicit clipping mechanism to interrupt the ODE, the scaling factor will inevitably suffer from an unbounded exponential explosion. Therefore, explicitly bounding the surrogate ratio via a clipping mechanism is an absolute prerequisite.

\begin{proof}[Proof of Theorem~\ref{thm:variance}]
The clipping operator acts as a binary sequence-level mask $\mathbb{I}_i(\theta) \in \{0, 1\}$ applied directly to the aggregated ratio $\rho_{i,p}(\theta)$ and its advantage $\widehat{A}_i$ (see Eq.~(\ref{slclippg})). Consequently, the clipped stochastic gradient $\hat{g}_i^{\text{clip}}(p)$ is either preserved in full (when $\mathbb{I}_i = 1$) or completely zeroed out (when $\mathbb{I}_i = 0$). Mathematically, this guarantees that the squared norm of the clipped gradient is universally bounded by the unclipped one:
\begin{equation*}
    \|\hat{g}_i^{\text{clip}}(p)\|^2 \;=\; \mathbb{I}_i(\theta) \cdot \|\hat{g}_i(p)\|^2 \;\le\; \|\hat{g}_i(p)\|^2.
\end{equation*}
Because the variance is bounded by the second moment, the upper bound we derived in Theorem~\ref{varbound1} carries over:
\begin{equation*}
    \|\mathrm{Var}(\widehat{\nabla}_\theta \mathcal{J}_{H_s})\| \;\le\; \frac{1}{B}\mathbb{E}\big[\|\hat{g}_i^{\text{clip}}(p)\|^2\big] \;\le\; \frac{M^2}{B}\, \mathbb{E}\!\left[ \widehat{A}_i^{\,2}\, \rho_{i,p}^2(\theta) \right].
\end{equation*}
Thus, the monotonic relationship between the variance bound and the parameter $p$ remains intact.  
\end{proof}
In contrast, \emph{token-level clipping} applies the clipping operator \emph{inside} the summation over individual tokens. It unpredictably alters specific token ratios, destroying the correspondence between the outer multiplier $\rho_{i,p}(\theta)^{1-p}$ and the inner weights $W_{i,t}(p)$ (see Eq.~(\ref{tles})). This structural fracture voids the monotonic upper bound derived above, making the variance highly uncontrolled. Our empirical results in Appendix~\ref{app:token_clip} (Table~\ref{tab:token_overall_results}) corroborate this: token-level clipping narrows the performance spread across $p$, confirming that the parameter $p$ loses its tight, predictable control over gradient variance.

\subsection{Approximate orthogonality of policy gradients}\label{or}
\begin{assumption}\label{orthogonal}
In long-horizon reasoning tasks, we assume that within a given sequence $y_i$, the policy gradients with respect to tokens at different positions are approximately orthogonal, i.e. $\mathbb{E}[g_t^T g_k] \approx 0$ ($g_t = \nabla_\theta \log \pi_\theta(y_{i,t} \mid x, y_{i,<t})$) for any two distinct tokens $t \neq k$ in $y_i$.    
\end{assumption}
This assumption is practical and well-founded in the context of LLMs due to two factors: the blessing of dimensionality and linguistic feature decoupling. First, geometrically, in a parameter space with billions of dimensions, any two distinct gradient vectors are statistically bound to be nearly orthogonal \citep{vershynin2018high}. Second, from a linguistic and mechanistic perspective, tokens at different positions within a long sequence typically serve distinct semantic and syntactic functions (e.g., predicting a generic preposition versus a complex domain-specific entity). Recent advances in mechanistic interpretability reveal that Transformer feed-forward layers operate as sparse key-value memories, where distinct neurons exclusively fire for specific linguistic patterns \citep{geva2020transformer, bricken2023towards}. Consequently, the subset of parameters responsible for encoding and predicting these distinct tokens are largely disjoint. This functional specialisation ensures that the back-propagated learning signals for distinct tokens are routed to different parameter subspaces, naturally leading to approximately uncorrelated, orthogonal gradient directions.

\subsection{Monotonicity of Variance}\label{orthogonalv}

While Assumption~\ref{orthogonal} and Remark~\ref{r2} establish that the token-level gradients are approximately orthogonal and uniformly bounded in practice, analysing the exact variance dynamics requires a formal mathematical model. To achieve this, we adopt a standard theoretical abstraction: we transition from the empirical approximations to an idealised setting where these conditions hold exactly. This formal idealisation allows us to decouple the intrinsic sequence-level aggregation behaviour from token-specific optimisation noise, paving the way for the analysis presented in Theorem~\ref{variance_optimal}.

\begin{theorem}\label{variance_optimal}
Under the idealised assumption of exact token-level gradient orthogonality ($\mathbb{E}[g_t^T g_k] = 0$ for $t \neq k$) and a uniformly bounded expected gradient norm ($\mathbb{E}[||g_t||^2] = M^2$), the exact second moment (and proportionally, the variance) of the H\"{o}lder-aggregated policy gradient estimator $\hat{g}_i$ simplifies to:
$$ \mathbb{E}[||\hat{g}_i||^2] = \widehat{A}_i^2 M^2 \rho_{i,p}(\theta)^2 \sum_{t=1}^{|y_i|} W_{i,t}(p)^2. $$
Consequently, we have the following properties:
\begin{enumerate}
    \item As $p$ decays from $+\infty$ to $0$, the variance strictly decreases.
    \item As $p \to -\infty$, the weight concentration index (HHI), defined as $\sum_{t} W_{i,t}(p)^2$, grows exponentially and collapses to $1$, counteracting the decrease in the H\"{o}lder mean $\rho_{i,p}(\theta)^2$.
    \item There exists a $p^* \le 0$ that strictly minimises the variance.
\end{enumerate}       
\end{theorem}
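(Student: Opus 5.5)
The plan is to first obtain the closed-form second moment and then study the one-variable function
\[
F(p) \coloneqq \rho_{i,p}(\theta)^2\, \mathrm{HHI}(p), \qquad \mathrm{HHI}(p) \coloneqq \sum_{t} W_{i,t}(p)^2 .
\]
For the formula, start from Eq.~\eqref{eq:rho-grad}, which gives $\hat g_i = \widehat{A}_i\, \rho_{i,p}\sum_t W_{i,t}(p)\, g_t$. Expanding the squared norm gives
\[
\|\hat g_i\|^2 = \widehat{A}_i^2\, \rho_{i,p}^2 \sum_{t,k} W_{i,t} W_{i,k}\, g_t^\top g_k .
\]
Treat the ratios $r_{i,t}$ (hence $\rho_{i,p}$ and $W_{i,t}$) as fixed given the sequence, i.e.\ condition on them, and take the expectation over the token gradients. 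Under exact orthogonality every cross term vanishes, and $\mathbb{E}\|g_t\|^2=M^2$ turns the diagonal into $M^2\sum_t W_{i,t}^2$. This yields the displayed identity. The variance statement then follows proportionally, as in the proof of Theorem~\ref{varbound1}.

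For item 1, work with $\log F = 2\log\rho_{i,p} + \log \mathrm{HHI}(p)$. The first term is strictly increasing by the generalised mean inequality, as already used in Theorem~\ref{varbound1}. For the second term, write $\mathrm{HHI}(p) = Z(2p)/Z(p)^2$ with $Z(p)=\sum_k r_{i,k}^p$. This gives
\[
\frac{d}{dp}\log\mathrm{HHI}(p) = 2\bigl(\mu_{y_i}(2p) - \mu_{y_i}(p)\bigr).
\]
By Lemma~\ref{lem:mean_monotonicity}, $\mu_{y_i}$ is strictly increasing, so this derivative is $>0$ for $p>0$ and $<0$ for $p<0$. Hence on $(0,\infty)$ both factors increase strictly, so $F$ strictly decreases as $p$ decays to $0$.

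For item 2, the same computation shows that $\mathrm{HHI}$ increases as $p\to-\infty$. Using the $p\to-\infty$ limit in Theorem~\ref{thm:deformation}, $\mathrm{HHI}(p)\to 1/|\mathcal{T}^-|$, which equals $1$ when the minimiser is unique. For the rate, factor out $r_{\min}^{2p}$: the deviation $1/|\mathcal{T}^-| - \mathrm{HHI}(p)$ is of order $(r_{\min}/r_{(2)})^{|p|}$, where $r_{(2)}$ is the second-smallest distinct ratio. This is the claimed exponential approach, and it competes with $\rho_{i,p}^2 \downarrow r_{\min}^2$.

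For item 3, note that $F$ is continuous on $(-\infty,0]$ (using Lemma~\ref{p=0} at $0$). At $p=0$ the derivative of $\log\mathrm{HHI}$ is $2(\mu(0)-\mu(0))=0$, while the derivative of $\log\rho$ is strictly positive. Therefore $F'(0)>0$, so $F$ is strictly smaller at some $p<0$ than at $p=0$. It remains to show that the infimum over $(-\infty,0]$ is attained rather than only approached as $p\to-\infty$. Since $F(-\infty)=r_{\min}^2/|\mathcal{T}^-|$, it suffices to exhibit one $p\le 0$ with $F(p)<r_{\min}^2/|\mathcal{T}^-|$; a continuous function on $(-\infty,0]$ with this property attains its minimum at a finite $p^*$.

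I expect this last comparison to be the main obstacle, because it is not automatic for all ratio configurations. I would first try $p=0$: there $F(0)=\bar r_{\mathrm{geo}}^2/|y_i|$, and this is below $r_{\min}^2/|\mathcal{T}^-|$ whenever $|y_i|\,(r_{\min}/\bar r_{\mathrm{geo}})^2 > |\mathcal{T}^-|$. That condition is natural in the long-horizon regime. If a fully unconditional statement is required, I would instead analyse the sign of $d\log F/dp = 2\mu(p)+2(\mu(2p)-\mu(p)) - 2\log\rho_{i,p}$ as $p\to-\infty$, i.e.\ compare the exponential decay rates of the two corrections. If that fails, I would state the long-sequence condition explicitly as a hypothesis.
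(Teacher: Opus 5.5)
The second-moment identity is derived as in the paper: expand the square, kill the cross terms, and treat $\rho_{i,p}$ and $W_{i,t}$ as fixed, which is the paper's own idealisation. Your items 1--2 are correct and more rigorous than the paper's version. The paper only asserts that the HHI falls on $(0,\infty)$ because the weights ``disperse towards uniform''. Your identity $\frac{d}{dp}\log\mathrm{HHI}(p)=2\bigl(\mu_{y_i}(2p)-\mu_{y_i}(p)\bigr)$, combined with Lemma~\ref{lem:mean_monotonicity}, settles monotonicity on both half-lines. Your limit $1/|\mathcal{T}^-|$ with rate $(r_{\min}/r_{(2)})^{|p|}$ also removes the paper's tacit assumption of a unique minimising token. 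One small point: $F'(0)>0$ does not follow from strict monotonicity of $\rho_{i,p}$ alone. It holds because $\frac{d}{dp}\log\rho_{i,p}\big|_{p=0}=\tfrac12\mathrm{Var}_{W_i^0}(\log r_{i,t})>0$.

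The obstacle you flag in item 3 is a genuine gap, but it lies in the statement, not in your reasoning. It is exactly where the paper's own proof breaks: the paper applies the extreme value theorem on the ``closed interval $[-\infty,0]$'', which does not stop the minimum from sitting at $p=-\infty$. Your fallback (b) cannot close this, for two reasons.

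First, your expression for $d\log F/dp$ drops a factor. Since $\log\rho_{i,p}=\frac1p\log\bigl(Z(p)/|y_i|\bigr)$, the correct derivative is
\[
\frac{d}{dp}\log F(p)=\frac{2}{p}\bigl(\mu_{y_i}(p)-\log\rho_{i,p}\bigr)+2\bigl(\mu_{y_i}(2p)-\mu_{y_i}(p)\bigr).
\]
As written, yours vanishes at $p=0$, which contradicts your $F'(0)>0$. Worse, $1/p<0$ on the relevant range, so the omission flips the sign of the leading term near $-\infty$ and would ``prove'' a dip that does not exist.

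Second, with the correct formula, $\log\rho_{i,p}^2-2\log r_{\min}\approx\frac{2}{|p|}\log\frac{|y_i|}{|\mathcal{T}^-|}$ decays only like $1/|p|$, while the HHI deficit is exponentially small. Hence $F$ always approaches $F(-\infty)=r_{\min}^2/|\mathcal{T}^-|$ from above, and the existence of a finite minimiser is a global question.

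It can fail. Take $|y_i|=2$ with ratios $1$ and $R>4$. For $p<0$ put $u=R^{p}\in(0,1)$, so that $|p|=\log(1/u)/\log R$. Then
\[
\log F(p)=\frac{2\log R}{\log(1/u)}\,\log\frac{2}{1+u}-\log\Bigl(1+\frac{2u}{1+u^2}\Bigr).
\]
This is positive if and only if $\log R>h(u)\coloneqq\frac{\log(1/u)\,\log\bigl(1+2u/(1+u^2)\bigr)}{2\log\bigl(2/(1+u)\bigr)}$. Elementary estimates give $h\le 2\log 2<\log R$ on $(0,1)$:
on $[\frac12,1)$ use $\log(1/u)\le 2(1-u)$ and $\log\frac{2}{1+u}\ge\frac{1-u}{2}$;
on $(0,\frac12]$ use $\log(1+x)\le x$ and $u\log(1/u)\le e^{-1}$.
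Hence $F(p)>1=F(-\infty)$ for every $p<0$; for $R=100$, $F(-1)\approx 3.84$, $F(-2)\approx 2.00$ and $F(-10)\approx 2^{1/5}$. Also $F(0)=R/2>1$, and $F$ increases on $(0,\infty)$. So no real $p^*$ minimises the variance, and item 3 is false as stated.

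Your option (c) is the right resolution. Assume $|y_i|(r_{\min}/\bar r_{\mathrm{geo}})^2>|\mathcal{T}^-|$, or more generally that some finite $p_0$ has $F(p_0)<F(-\infty)$. Then your compactness argument together with $F'(0)>0$ yields a minimiser $p^*<0$. Neither your argument nor the paper's establishes uniqueness, which ``strictly minimises'' suggests.
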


\begin{proof}[Proof of Theorem~\ref{variance_optimal}]
Keeping symbols from the proof of Theorem~\ref{varbound1} and Assumption~\ref{orthogonal}, the H\"{o}lder-aggregated policy gradient for a single trajectory $y_i$ is:
\begin{equation*}
    \hat{g}_i(p) = \widehat{A}_i \rho_{i,p}(\theta) \sum_{t=1}^{|y_i|} W_{i,t}(p) g_t.
\end{equation*}
We expand the squared $L_2$-norm of this estimator:
\begin{equation*}
    ||\hat{g}_i(p)||^2 = \widehat{A}_i^2 \rho_{i,p}(\theta)^2 \left( \sum_{t=1}^{|y_i|} \sum_{k=1}^{|y_i|} W_{i,t}(p) W_{i,k}(p) g_t^T g_k \right).
\end{equation*}
Taking the expectation with respect to the trajectory distribution, and applying the idealised token-level gradient orthogonality ($\mathbb{E}[g_t^T g_k] = 0$ for $t \neq k$), all cross-terms vanish exactly. Using the idealised uniform expected magnitude assumption ($\mathbb{E}[||g_t||^2] = M^2$), we obtain the exact second moment:
\begin{equation*}
    \mathbb{E}[||\hat{g}_i(p)||^2] = \widehat{A}_i^2 \rho_{i,p}(\theta)^2 \sum_{t=1}^{|y_i|} W_{i,t}(p)^2 \mathbb{E}[||g_t||^2] = \widehat{A}_i^2 M^2 \rho_{i,p}(\theta)^2 \sum_{t=1}^{|y_i|} W_{i,t}(p)^2.
\end{equation*}
Recognising that $\sum_{t} W_{i,t}(p)^2$ is exactly the Herfindahl-Hirschman Index (HHI) of the weight distribution, denoted as $\mathcal{H}_{HHI}(p)$, we analyse the exact variance dynamics based on this factorisation $V(p) \propto \rho_{i,p}(\theta)^2 \cdot \mathcal{H}_{HHI}(p)$:

\textbf{Proof of Property 1.} 
As established in Lemma~1, the H\"{o}lder mean $\rho_{i,p}(\theta)$ is strictly monotonically increasing with respect to $p$. Concurrently, for $p > 0$, the weight distribution gradually disperses from a strict one-hot distribution (at $p \to +\infty$) towards a uniform distribution (at $p \to 0$). Because the uniform distribution globally minimises the HHI (where $\mathcal{H}_{HHI}(0) = 1/|y_i|$), $\mathcal{H}_{HHI}(p)$ is strictly monotonically decreasing as $p$ decays from $+\infty$ to $0$. Since both $\rho_{i,p}(\theta)^2$ and $\mathcal{H}_{HHI}(p)$ are strictly decreasing as $p$ decreases in $(0, +\infty)$, their product $V(p)$ must strictly decrease.

\textbf{Proof of Property 2.} 
As $p \to -\infty$, the H\"{o}lder mechanism heavily upweights the minimum elements. The weight distribution collapses into a one-hot distribution centred exclusively on the token(s) with the minimum importance ratio. Consequently, $\lim_{p \to -\infty} W_{i,t_{\min}}(p) = 1$, which drives the concentration index $\mathcal{H}_{HHI}(p)$ to grow exponentially back to its maximum possible value of $1$. This sharp exponential growth of the HHI acts as a strong regulariser, counteracting the continuing decay of the H\"{o}lder mean $\rho_{i,p}(\theta)^2$.

\textbf{Proof of Property 3.} 
Let $V(p) = \rho_{i,p}^2 \cdot \mathcal{H}_{HHI}(p)$ represent the variance objective. From Property~1, $V(p)$ is strictly monotonically increasing for all $p \in (0, +\infty)$, implying that the global minimum of $V(p)$ cannot exist in the positive domain. At $p = 0$, the variance evaluates to $V(0) = \rho_{i,0}^2 \cdot (1/|y_i|)$. As $p$ decreases into the negative domain ($p < 0$), $\rho_{i,p}^2$ continues to decay, but $\mathcal{H}_{HHI}(p)$ begins to increase towards $1$ (Property~2). Because $V(p)$ is a continuous function bounded from below (by $0$) defined on the closed interval $[-\infty, 0]$, by the Extreme Value Theorem, it must attain a minimum. Since it strictly increases for $p > 0$, this global variance-minimising point $p^*$ is strictly guaranteed to satisfy $p^* \le 0$. 
\end{proof}

\begin{remark}\label{r2}
The assumption that $\mathbb{E}[||g_t||^2] \approx M^2$ (i.e., token-level policy gradients have homogeneously expected magnitudes) is both a standard simplification and practically well-founded for modern LLMs for two reasons.

1.~\textit{Architectural Normalisation:} Modern LLMs heavily rely on RMSNorm or LayerNorm before the final classification head. This strictly bounds the magnitude of the hidden states, thereby stabilising the scale of the back-propagated log-probability gradients across different token positions.

2.~\textit{Statistical Stationarity over Long Horizons:} While specific tokens might incur momentary gradient spikes, the expected squared norm over the data distribution tends toward a stationary value $M^2$ because all tokens share the same underlying language modelling head and projection matrices.

\end{remark}

\section{Quantitative Advantage of Dynamic Scheduling}
\label{app:thm3}

\begin{remark}
In Theorem~\ref{thm:dynamic}, the exponential amplification of the sparse reward signal relies on the \emph{pre-saturation condition} $r_{i,t^{*}}^{p_{\text{high}}} \ll n - 1$. This inequality is not merely a mathematical artefact, but rather a formalisation of the early-phase training dynamics in long-horizon LLM reasoning. We clarify its physical meaning and empirical validity as follows.

1. Mathematically, the term $r_{i,t^{*}}^{p_{\text{high}}}$ represents the amplified signal of the single correct reasoning token, while $n - 1$ represents the aggregated background mass of the remaining unremarkable tokens in the sequence. When $r_{i,t^{*}}^{p_{\text{high}}} \gg n - 1$, the weight $W_{i,t^{*}}$ saturates to $1$, meaning the model has already become overwhelmingly confident in this step, and the gradient is completely monopolised. Therefore, the pre-saturation condition ($\ll n - 1$) defines the critical case where the policy has discovered a high-reward token but is not yet absolutely confident. It is precisely in this window that the model desperately needs the exponential gradient boost provided by $p_{\text{high}}$ to escape the noise.

2. This condition is exceptionally easy to satisfy in modern LLM reasoning tasks (e.g., AIME or MATH) due to two structural factors:
\begin{itemize}
    \item \textbf{Massive Sequence Length ($n$):} Chain-of-Thought (CoT) trajectories are inherently long, often spanning hundreds or thousands of tokens ($n \sim 10^3$). Consequently, the background mass $n-1$ provides a massive buffer.
    \item \textbf{Early-Phase Low Confidence ($r_{i,t^{*}}$):} In the early stages of RLVR, finding the correct reasoning path is a rare event. Even when the model stumbles upon the correct logic, its generation probability $\pi_\theta$ is only marginally higher than the reference $\pi_{\text{ref}}$. Thus, the initial ratio $r_{i,t^{*}}$ is moderately greater than $1$, but absolutely not large enough to let its $p$-th power immediately overpower thousands of background tokens.
\end{itemize}

3. Crucially, the pre-saturation condition justifies our dynamic scheduling design. As training progresses, the model fits the correct trajectory, and $r_{i,t^{*}}$ grows. Eventually, the condition $r_{i,t^{*}}^{p_{\text{high}}} \ll n - 1$ will be violated (saturation occurs), rendering $p_{\text{high}}$ mathematically ineffective at further isolating the signal. Exactly at this point, our dynamic schedule seamlessly decays $p \to p_{\text{low}} \le 0$, shifting the algorithmic focus from signal amplification to variance contraction (Theorem~\ref{thm:dynamic}, Part~2).
\end{remark}

\begin{proof}[Proof of Theorem~\ref{thm:dynamic}]

Let $R \coloneqq r_{i,t^{*}} \gg 1$. For the remaining tokens $t \neq t^{*}$, since their ratios are constant-bounded, we can denote their sum of $p$-th powers as $S(p) \coloneqq \sum_{t \neq t^{*}} r_{i,t}^p = \Theta(n - 1)$, which holds uniformly for any $p$ in a bounded interval $[p_{\text{low}}, p_{\text{high}}]$. By definition, the weight of the high-ratio token is:
\begin{equation*}
    W_{i,t^{*}}(p) \;=\; \frac{R^p}{R^p + S(p)}.
\end{equation*}
Therefore, the relative amplification of the gradient weight when shifting from $p_{\text{stat}}$ to $p_{\text{high}}$ is given by:
\begin{align*}
    \frac{W_{i,t^{*}}(p_{\text{high}})}{W_{i,t^{*}}(p_{\text{stat}})} 
    \;&=\; \frac{R^{p_{\text{high}}}}{R^{p_{\text{high}}} + S(p_{\text{high}})} \cdot \frac{R^{p_{\text{stat}}} + S(p_{\text{stat}})}{R^{p_{\text{stat}}}} \\[6pt]
    \;&=\; R^{\,p_{\text{high}} - p_{\text{stat}}} \cdot \frac{R^{p_{\text{stat}}} + S(p_{\text{stat}})}{R^{p_{\text{high}}} + S(p_{\text{high}})}.
\end{align*}
Under the pre-saturation condition $R^{p_{\text{high}}} \ll n - 1$, the term $R^{p_{\text{high}}}$ is asymptotically dominated by the denominator's background sum $S(p_{\text{high}}) = \Theta(n-1)$. Since $p_{\text{stat}} < p_{\text{high}}$, we naturally also have $R^{p_{\text{stat}}} \ll n - 1$. Consequently, the fractional multiplier is bounded from below by a strictly positive constant $C = \Theta(1)$:
\begin{equation*}
    \frac{R^{p_{\text{stat}}} + S(p_{\text{stat}})}{R^{p_{\text{high}}} + S(p_{\text{high}})} \;\ge\; \frac{S(p_{\text{stat}})}{R^{p_{\text{high}}} + S(p_{\text{high}})} \;\ge\; C \;>\; 0.
\end{equation*}
Substituting $R = r_{i,t^{*}}$ back into the expression yields the desired exponential lower bound for the signal amplification:
\begin{equation*}
    \frac{W_{i,t^{*}}(p_{\text{high}})}{W_{i,t^{*}}(p_{\text{stat}})} \;\ge\; C \cdot r_{i,t^{*}}^{\,p_{\text{high}} - p_{\text{stat}}}.
\end{equation*}

By the definition provided in the theorem, the variance bound term is exactly $V(p) \coloneqq \mathbb{E}[\widehat{A}_i^{\,2}\, \rho_{i,p}^2(\theta)]$. Assuming the importance ratios within the sequence are non-degenerate (i.e., not all tokens share the exact same ratio), the generalised mean inequality guarantees that the H\"{o}lder mean $\rho_{i,p}(\theta)$ is strictly monotonically increasing with respect to $p$. Thus, for any $p_{\text{low}} < p_{\text{stat}}$, we have $\rho_{i,p_{\text{low}}}(\theta) < \rho_{i,p_{\text{stat}}}(\theta)$ pointwise for every sequence $y_i$. Since the squared advantage $\widehat{A}_i^{\,2} \ge 0$ (and is strictly positive for meaningful updates), squaring the strictly positive H\"{o}lder means yields the following pointwise inequality for the random variables:
\begin{equation*}
    \widehat{A}_i^{\,2}\, \rho_{i,p_{\text{low}}}^2(\theta) \;<\; \widehat{A}_i^{\,2}\, \rho_{i,p_{\text{stat}}}^2(\theta).
\end{equation*}
Taking the expectation over the trajectory distribution strictly preserves this inequality, yielding:
\begin{equation*}
    \mathbb{E}\!\left[\widehat{A}_i^{\,2}\, \rho_{i,p_{\text{low}}}^2(\theta)\right] \;<\; \mathbb{E}\!\left[\widehat{A}_i^{\,2}\, \rho_{i,p_{\text{stat}}}^2(\theta)\right],
\end{equation*}
which directly concludes that $V(p_{\text{low}}) < V(p_{\text{stat}})$.
\end{proof}

\section{Broader Impacts}
\label{broader impacts}

By improving the efficiency and stability of RL post-training, H\"{o}lderPO can reduce the compute required to reach competitive performance on complex reasoning benchmarks, lowering the barrier for researchers and practitioners to develop capable reasoning models. Like any policy optimisation method, it inherits the standard dual-use risks of strong LLMs, including potential misuse for misinformation or automated content generation. A concern more specific to our framework is that the gradient amplification in the positive-pp
p regime can intensify reward hacking when learning signals are misspecified, a limitation we discuss explicitly in Section~\ref{sec:conclusion}.
    
\FloatBarrier



\end{document}